\documentclass[a4paper]{article}
\usepackage[margin=1in]{geometry}

\usepackage{microtype}
\usepackage{graphicx}
\usepackage{subcaption}
\usepackage{booktabs} 
\usepackage{here}
\usepackage{authblk}

\usepackage{amsmath,amsthm,amsfonts,amssymb}
\usepackage{mathtools}
\usepackage{xcolor}
\usepackage[sort]{natbib}
\usepackage{algorithm} 
\usepackage{algpseudocode} 
\usepackage{url}
\usepackage{multirow}
\usepackage{hyperref} 

\usepackage[T1]{fontenc}
\usepackage[utf8]{inputenc}
\usepackage{bbm}

\usepackage{thmtools}
\usepackage{thm-restate}

\def\*#1{\boldsymbol{#1}}

\theoremstyle{plain}

\newtheorem{theorem}{Theorem}[section]

\newtheorem{lemma}[theorem]{Lemma}

\theoremstyle{definition}
\newtheorem{definition}[theorem]{Definition}
\newtheorem{assumption}[theorem]{Assumption}
\theoremstyle{remark}

\DeclareMathOperator*{\argmin}{arg\,min}
\DeclareMathOperator*{\argmax}{arg\,max}

\newcommand{\lw}[1]{\smash{\lower2.ex\hbox{#1}}}

\newcommand{\rbr}[1]{\left(#1\right)}
\newcommand{\sbr}[1]{\left[#1\right]}
\newcommand{\cbr}[1]{\left\{#1\right\}}

\newcommand{\RR}{\mathbb{R}}
\newcommand{\NN}{\mathbb{N}}

\newcommand{\EE}{\mathbb{E}}

\newcommand{\cD}{{\cal D}}

\newcommand{\cG}{{\cal G}}

\newcommand{\cN}{{\cal N}}

\newcommand{\cP}{{\cal P}}

\newcommand{\cX}{{\cal X}}

\title{Optimal-Point Variance Reduction\\For Bayesian Optimization With Regret Guarantee}

\date{}

\author[1]{Shion Takeno}

\affil[1]{Nagoya University}

\affil[ ]{\texttt{{takeno.s.mllab.nit@gmail.com}}}

\begin{document}

\maketitle

\begin{abstract}
This paper studies a one-step lookahead Bayesian optimization (BO) method and its theoretical guarantee.
Although the empirical effectiveness of one-step lookahead BO methods, such as entropy search, has been studied extensively, they often rely on computationally intractable approximations, and their regret guarantees remain underdeveloped.
Thus, this paper proposes a one-step lookahead BO method called optimal-point variance reduction (OVR), which requires only posterior sampling and Monte Carlo approximations.
We obtain a uniform error bound over an input domain for the Monte Carlo estimation in OVR.
Furthermore, we show that the regularized OVR, with the slight modification to promote exploration, achieves a vanishing Bayesian expected simple regret upper bound.
Finally, we demonstrate the effectiveness of OVR through numerical experiments.
\end{abstract}


\section{Introduction}
\label{sec:introduction}

Bayesian optimization (BO) \citep{Shahriari2016-Taking} is a framework for the global optimization of expensive-to-evaluate black-box functions. 
BO aims to optimize an objective function with fewer function evaluations.
For this purpose, BO sequentially selects an input to evaluate by maximizing acquisition functions (AFs) based on a surrogate model, typically a Gaussian process (GP) \citep{Rasmussen2005-Gaussian}. 
BO has been applied to various domains, including hyperparameter tuning in machine learning and experimental design in chemistry \citep{Snoek2012-Practical,korovina2020-chemBO}.

Many AFs for BO evaluate the local utility of candidate inputs using metrics such as improvements or quantities inspired by multi-armed bandit algorithms.
For instance, probability of improvement (PI) and expected improvement (EI) \citep{Kushner1964-new,Mockus1978-Application,Schonlau1998-Global} quantify the potential gain relative to the current best observed value.
Other approaches, such as upper confidence bound (UCB) \citep{Srinivas2010-Gaussian} and Thompson sampling (TS) \citep{Russo2014-learning,takeno2024-posterior}, are inspired by bandit algorithms and are justified by sublinear cumulative regret upper bounds achieved by theoretically balancing exploration and exploitation.

Concurrently, one-step lookahead policies have been extensively studied for their ability to account for global utility across the entire input domain. 
Empirical evaluations indicate that one-step lookahead methods, such as entropy search (ES) \citep{Villemonteix2009-aninformational,Henning2012-Entropy,Hernandez2014-Predictive} and knowledge gradient (KG) \citep{Frazier2009-knowledge}, often exhibit superior performance. 
However, despite their practical efficacy, the theoretical characterization of one-step lookahead BO remains underdeveloped. 
In addition, one-step lookahead policies often suffer from computationally intractable approximations.
Consequently, a gap exists between the practical efficacy of one-step lookahead strategies and their theoretical justification in terms of approximation accuracy and optimization performance.

This paper proposes and analyzes optimal-point variance reduction (OVR) and regularized OVR (ROVR), which are one-step lookahead BO methods that aim to minimize the posterior variance or standard deviation (STD) at the optimal input.
We summarize our contributions below:
\begin{enumerate}
    \item We propose an easy-to-implement OVR and its variant, ROVR, which minimize the posterior variance or STD at the optimal input.
    OVR and ROVR require only posterior sampling and Monte Carlo (MC) approximations and do not require heuristic approximations as in predictive ES (PES) \citep{Hernandez2014-Predictive}, max-value ES (MES) \citep{Wang2017-Max}, and joint ES (JES) \citep{hvarfner2022-joint,tu2022-joint}.
    \item We provide a uniform MC estimation error bound over an input domain in Theorems~\ref{thm:MCError_discrete} and \ref{thm:MCError_continuous}, which indicate $O(\log M / \sqrt{M})$ convergence even for continuous input domains, where $M$ is the number of MC samples. 
    \item We show that ROVR achieves the vanishing expected Bayesian simple regret (BSR) upper bound comparable to most existing results \citep{Russo2014-learning,Takeno2023-randomized,takeno2024-posterior,takeno2025-regret,takeno2025regretEI} in Theorem~\ref{thmBSRBound}.
\end{enumerate}
Finally, we demonstrate the effectiveness of OVR and ROVR via numerical experiments.

\subsection{Related work}
\label{sec:related}

We analyze the BO algorithm in the Bayesian setting, where the objective function follows a GP \citep{Russo2014-learning,Srinivas2010-Gaussian,iwazaki2025improved}, although the frequentist setting has also been studied extensively \citep{Srinivas2010-Gaussian,Chowdhury2017-on,iwazaki2025-improvedGPbandit}.
Furthermore, although several studies analyze the noiseless setting \citep{iwazaki2025gaussian,freitas2012exponential}, we focus on the noisy setting, where the observation is contaminated by noise.
Analyzing the one-step lookahead BO methods in the frequentist or noiseless setting is an intriguing direction for future work.

KG \citep{Frazier2009-knowledge,Wu2017-Continuous} is a one-step lookahead BO method that aims to maximize the posterior mean in the next step.
However, computing the AF involves computationally expensive nested optimization.
Therefore, in practice, complicated and heuristic approximations are required.
Furthermore, the analysis of KG is limited to the asymptotic convergence.
Therefore, finite-sample analysis for KG, which is important in the BO regime with scarce data, remains an open problem.

ES \citep{Villemonteix2009-aninformational,Henning2012-Entropy} maximizes the mutual information between the queried output and the target variable, such as the optimal input, which can be seen as a one-step lookahead minimization of the entropy of the target variable.
There are several variants of ES, such as PES \citep{Hernandez2014-Predictive}, which evaluates the mutual information with respect to the optimal input, MES \citep{Wang2017-Max}, which evaluates the mutual information with respect to the maximum value of the objective function, and JES \citep{hvarfner2022-joint,tu2022-joint}, which evaluates the mutual information with respect to both the optimal input and the maximum value.
However, all of these methods require (i) posterior sampling approximation, (ii) MC approximation, and (iii) heuristic approximation for the conditional distribution given the optimal input and/or the maximum value.
In particular, the validity of the conditional distribution approximation is unknown.
Furthermore, their theoretical guarantee remains an open problem.
\footnote{
Although \citet{Wang2017-Max} claimed the one-sample variant of MES achieves the sublinear cumulative regret upper bound, its flaw has been pointed out by \citet{takeno2022-sequential}.
Afterward, the regret analysis was rectified by \citet{takeno2024-posterior}, but its connection to UCB was implied, and interpreting the one-sample variant of MES as a one-step lookahead method is not natural.
}

Another closely related one-step lookahead BO method is truncated variance reduction (TruVaR) \citep{bogunovic2016truncated}.
TruVaR minimizes the average posterior variance over {\it potential maximizers}, which is the set of inputs that could be maximizers given the confidence interval.
Thus, TruVaR and OVR are almost the same, except for the input points whose posterior variances are evaluated.
However, since enumerating the potential maximizer for continuous input domains is computationally intractable and the potential maximizer is too conservative in the theoretical setting, the practical performance of OVR is often higher than that of TruVaR.
Furthermore, \citet{bogunovic2016truncated} analyzed TruVaR under the assumption of the submodularity of the posterior variance (Assumption 3.1 in \citep{bogunovic2016truncated}), although the posterior variance does not satisfy the submodularity in general, as discussed in \citep{takeno2025distributionally}.
In contrast, we do not assume such submodularity.

\section{Preliminary}
\label{sec:preliminary}

\paragraph{Bayesian optimization.}
We consider an optimization for a black-box function $f: \cX \rightarrow \RR$:
$$\*x^* = \argmax_{\*x \in \cX} f(\*x),$$
where $\cX \subset \RR^d$ is an input domain and $d$ is a dimension.
We assume that observations $y = f(\*x) + \varepsilon$ are contaminated by noise $\varepsilon$, and the function evaluation cost is high.
Thus, our goal is to optimize $f$ using as few observations as possible.
For this goal, BO sequentially selects an input point $\*x_t$ based on AFs computed by the GP model trained by the currently available dataset and observes $y_t = f(\*x_t) + \varepsilon_t$, where $t$ is the iteration.
%
%
%
Therefore, we can obtain the training data $\cD_{t-1} = \{ (\*x_i, y_{\*x_i}) \}_{i=1}^{t-1}$ until the beginning of $t$-th iteration.

\paragraph{Regularity assumptions.}
We assume the following common regularity assumption \citep{Srinivas2010-Gaussian,Russo2014-learning}:
\begin{assumption}
    The function $f$ follows a zero-mean GP with a predefined kernel $k: \cX \times \cX \rightarrow \RR$, that is, $f \sim \cG \cP (0, k)$, and the $i$-th observation $y_{\*x_i}$ is contaminated by i.i.d. Gaussian noise $\varepsilon_i \sim \cN(0, \sigma^2)$ with a positive variance $\sigma^2 > 0$ as $y_{\*x_i} = f(\*x_i) + \varepsilon_i$.
    In addition, the kernel function satisfies $k(\*x, \*x^\prime) \leq 1$ for all $\*x, \*x^\prime \in \cX$.
    \label{assump:Bayesian}
\end{assumption}
Furthermore, for continuous $\cX$, we assume the following smoothness condition:
\begin{assumption}
    Let $\cX \subset [0, r]^d$ be a compact set, where $r > 0$.
    Assume that the kernel $k$ satisfies the following condition on the derivatives of a sample path $f$:
    \begin{align*}
        \Pr \left( \max_{j \in [d]} \sup_{\*x \in \cX} \left| \frac{\partial f(\*u)}{\partial u_j} \Big|_{\*u=\*x} \right| > L \right) \leq a d \exp \left( - \frac{L^2}{b^2} \right),
    \end{align*}
    where $a, b > 0$ are absolute constants.
    \label{assump:Bayesian_continuous}
\end{assumption}
This assumption is commonly used \citep{Srinivas2010-Gaussian,Kandasamy2018-Parallelised,Takeno2023-randomized,takeno2024-posterior,iwazaki2025improved} and holds at least for 
the squared exponential (SE) kernel $k_{\rm SE} (\*x, \*x^\prime) = \exp\left( - \| \*x - \*x^\prime \|_2^2 / (2 \ell^2) \right)$ and 
Mat\'{e}rn-$\nu$ kernels $k_{\rm Mat} = \frac{2^{1 - \nu}}{\Gamma(\nu)} \left( \frac{\sqrt{2\nu} \| \*x - \*x^\prime \|_2 }{\ell} \right)^{\nu} J_{\nu} \left( \frac{\sqrt{2\nu} \| \*x - \*x^\prime \|_2 }{\ell} \right) $ with $\nu > 2$, where $\ell, \nu > 0$ are the lengthscale and smoothness parameter, respectively, and $\Gamma(\cdot)$ and $J_{\nu}$ are Gamma and modified Bessel functions, respectively \citep[Theorem~5 in][]{Ghosal2006-posterior,Srinivas2010-Gaussian}.
Moreover, for the MC estimation error analysis, we assume the following smoothness condition:
\begin{assumption}
    Let $\cX \subset [0, r]^d$ be a compact set, where $r > 0$.
    There exists a constant $L_k > 0$ such that
    \begin{align*}
        \forall \*u, \*v, \in \cX,
        \max_{\*x \in \cX} |k(\*u, \*x) - k(\*v, \*x)| \leq L_k \|\*u - \*v \|_1,
    \end{align*}
    that is, $k(\cdot, \*x)$ is Lipschitz continuous.
    \label{assump:MCEstimation_continuous}
\end{assumption}
A similar assumption has been used in the GP bandit literature \citep{Chowdhury2017-on,vakili2021-optimal}.
For example, $L_k = O(1 / \ell)$ for SE and Mat\'ern kernels due to $k(\*x, \*x^\prime) \leq 1$.

\paragraph{Gaussian process model.}
From the assumptions on $f$ and noise $\{ \varepsilon_t \}_{t \in \NN}$ in Assumption~\ref{assump:Bayesian}, the posterior distribution $p(f \mid \cD_{t-1})$ is a GP again~\citep{Rasmussen2005-Gaussian}, whose mean and variance can be obtained as follows:
\begin{equation}
    \begin{aligned}
        \mu_{t-1}(\*x) &= \*k_{t-1}(\*x)^\top \bigl(\*K_{t-1} + \sigma^2 \*I_{t-1} \bigr)^{-1} \*y_{t-1}, \\
        k_{t-1}(\*x, \*x^\prime) &=  k(\*x, \*x^\prime) - \*k_{t-1}(\*x) ^\top \bigl(\*K_{t-1} + \sigma^2 \*I_{t-1} \bigr)^{-1} \*k_{t-1}(\*x^\prime),
    \end{aligned}
    \label{eq:GP}
\end{equation}
where $\*k_{t-1}(\*x) \coloneqq \bigl( k(\*x, \*x_1), \dots, k(\*x, \*x_{t-1}) \bigr)^\top \in \RR^{t-1}$ is the kernel vector, $\*K_{t-1} \in \RR^{(t-1)\times (t-1)}$ is the kernel matrix whose $(i, j)$-element is $k(\*x_i, \*x_j)$, $\*I_{t-1} \in \RR^{(t-1)\times (t-1)}$ is the identity matrix, and $\*y_{t-1} \coloneqq (y_1, \dots, y_{t-1})^\top \in \RR^{t-1}$.
%
%
For the sake of notational simplicity, let $\sigma_{t-1}^2 (\*x) = k_{t-1}(\*x, \*x)$ and the posterior variance when $\*x_t = \*x$ be $\sigma_{t}^2 (\*x^\prime \mid \*x) \coloneqq \sigma_{t-1}^2 (\*x^\prime) - \frac{k_{t-1}^2(\*x, \*x^\prime)}{\sigma_{t-1}^2 (\*x) + \sigma^2}$.
Note that the posterior variance calculation does not require $\*y_t$.

\paragraph{Maximum information gain.}
For regret analysis, we will use the quantity called maximum information gain (MIG) \citep{Srinivas2010-Gaussian,vakili2021-information}:
\begin{definition}[Maximum information gain]
    Let $f \sim \cG \cP (0, k)$ over $\cX \subset [0, r]^d$.
    Let $A = \{ \*a_i \}_{i=1}^T$, where $\*a_i \in \cX$ for all $i \in [T]$.
    Let $\*f_A = \bigl(f(\*a_i) \bigr)_{i=1}^T$, $\*\varepsilon_A = \bigl(\varepsilon_i \bigr)_{i=1}^T$, where $\varepsilon_i \sim \cN(0, \sigma^2)$ for all $i \in [T]$, and $\*y_A = \*f_A + \*\varepsilon_A \in \RR^T$.
    Then, MIG $\gamma_T$ is defined as follows:
    \begin{align}
        \gamma_T \coloneqq \sup_{A} I(\*y_A ; \*f_A) \text{ such that } |A| = T \text{ and } \*a_i \in \cX \text{ for all } i \in [T],
    \end{align}
    where $I$ is the Shannon mutual information.
    \label{def:MIG}
\end{definition}
For frequently used kernels, MIG is known to be sublinear \citep{Srinivas2010-Gaussian,vakili2021-information}, for example, 
$\gamma_T = O\bigl( d \log T \bigr)$ for the linear kernels,
$\gamma_T = O\bigl( (\log T)^{d+1} \bigr)$ for the SE kernels, and
$\gamma_T = O\bigl( T^{\frac{d}{2\nu + d}} (\log T)^{\frac{2\nu}{2\nu + d}} \bigr)$ for the Mat\'ern-$\nu$ kernels.

\paragraph{Performance measure.}
We evaluate the theoretical performance of BO methods by the expected Bayesian simple regret~\citep{Russo2014-learning,russo2018learning,Kandasamy2018-Parallelised,paria2020-flexible,Takeno2023-randomized,takeno2024-posterior,takeno2025regretEI} defined as
\begin{align}
    {\rm BSR}_T = \EE \left[ f(\*x^*) - f(\hat{\*x}_T) \right],
\end{align}
where we define $\hat{\*x}_T = \argmax_{\*x \in \cX} \mu_T (\*x)$ and the expectation is taken with all the randomness, that is, $f, \{\varepsilon_i\}_{i \in \NN}$, and the randomness of the algorithm.
Note that, although the expected regret quantifies the average performance of the algorithm, the expected regret upper bound immediately implies a high-probability upper bound via Markov's inequality \citep[Sec. 3.1 in][]{Russo2014-learning}.
Our goal is to show the convergence rate of the BSR.

\section{Proposed methods: OVR and regularized OVR}
\label{sec:algorithm}

This section provides the definition of our OVR and ROVR, which mainly aim to minimize $\EE \sbr{\sigma_{t-1}(\*x^*) \mid \cD_T}$.
Although our formulation accounts for the posterior STD, we use the term {\it variance reduction} since it is more common.
On the other hand, even if we consider minimizing $\EE \sbr{\sigma_{t-1}^2(\*x^*) \mid \cD_T}$, our theoretical analyses hold in almost the same way.
Algorithm~\ref{alg:OVR} shows the pseudo code of OVR and ROVR.

We motivate minimization of $\EE \sbr{\sigma_{t-1}(\*x^*) \mid \cD_T}$ from the perspective of regret analysis.
From the analysis by \citet{Russo2014-learning} (see Appendix~\ref{sec:proof_regret} for details), we can obtain, for any $\cD_T$,
\begin{align*}
    \EE[f(\*x^*) - f(\hat{\*x}_T) \mid \cD_T]
    &= \EE[f(\*x^*) - \mu_T (\*x^*) \mid \cD_T] 
    \leq \beta_T \EE[\sigma_T (\*x^*) \mid \cD_T] + O \rbr{\frac{1}{T}},
\end{align*}
where $\beta_t = O\rbr{\log (|\cX|t)}$ if $|\cX| < \infty$ or $\beta_t = O\rbr{d\log (t)}$ if Assumption~\ref{assump:Bayesian_continuous} holds.
Note that the expectation is taken with respect to $f$ and $\*x^*$ since we assume $f$ follows a GP and $\*x^*$ is defined via $f$.
Therefore, if we can decrease $\EE[\sigma_T (\*x^*) \mid \cD_T]$ rapidly with respect to $T$, we can obtain a tighter BSR bound.
However, optimizing $\{\*x_1, \dots, \*x_T \}$ for minimizing $\EE[\sigma_T (\*x^*) \mid \cD_T]$ is computationally intractable.
Hence, we develop the AF to minimize $\EE[\sigma_t (\*x^* \mid \*x) \mid \cD_{t-1}]$ in a one-step lookahead manner and its regularized variant, which achieves the vanishing BSR upper bound.

\begin{algorithm}[!t]
    \caption{OVR and ROVR}\label{alg:OVR}
    \begin{algorithmic}[1]
        \Require Domain $\cX$, kernel $k$, initial dataset $\cD_{0}$, \# MC samples $\{M_t\}_{t \in \NN}$, regularization parameter $\{ c_t \}_{t \in \NN}$ ($c_t = 0$ for all $t \in \NN$ for OVR)
        \For{$t = 1, \dots, T$}
            \State Update $\mu_{t-1} (\cdot)$ and $\sigma_{t-1}^2 (\cdot)$ by Eq.~\eqref{eq:GP}
            \For{$m = 1, \dots, M_t$}
                \State Generate posterior sample path $g_t \sim p(f \mid \cD_{t-1})$ based on \citep{Wilson2020-efficiently,wilson2021pathwise}
                \State Optimize $\*x^*_{t, m} = \argmax_{\*x \in \cX} g_t(\*x)$ 
            \EndFor
            \State Compute $\*x_t = \argmin_{\*x \in \cX} \left\{ \frac{1}{M_t} \sum_{m = 1}^{M_t} \sigma_t (\*x^*_{t, m} \mid \*x) - c_t \sigma_{t-1}(\*x) \right\}$
            \State Observe $y_{t} = f(\*x_{t}) + \varepsilon_{t}$
        \EndFor
        \State \Return $\hat{\*x}_T$
    \end{algorithmic}
\end{algorithm}

\subsection{Optimal-point variance reduction}
\label{sec:OVR}

From the discussion above, we define the AF of OVR, which is minimized over $\cX$, as follows:
\begin{align*}
    \alpha_t (\*x) 
    &= \EE_{y_{\*x}} \sbr{ \EE_{\*x^*} \sbr{\sigma_t (\*x^* \mid \*x) \bigm| \cD_{t-1} \cup \{ (\*x, y_{\*x}) \}} \bigm| \cD_{t-1}, \*x },
\end{align*}
which directly evaluates the target quantity in the next step, averaged over $p(y_{\*x} \mid \cD_{t-1}, \*x)$.
In contrast to most existing one-step lookahead BO methods, which rely on heuristic approximations discussed in Section~\ref{sec:related}, our AF can be transformed analytically as follows:
\begin{align*}
    \alpha_t (\*x) 
    &= \EE_{y_{\*x}} \sbr{ \EE_{\*x^*} \sbr{\sigma_t (\*x^* \mid \*x) \bigm| \cD_{t-1} \cup \{ (\*x, y_{\*x}) \}} \bigm| \cD_{t-1}, \*x } \\
    &= \EE_{y_{\*x}, \*x^*} \sbr{\sigma_t (\*x^* \mid \*x) \bigm| \cD_{t-1},\*x} \\
    &= \EE_{\*x^*} \sbr{\sigma_t (\*x^* \mid \*x) \bigm| \cD_{t-1}},
\end{align*}
where we use the tower property of the expectation and the fact that $\sigma_t (\*x^* \mid \*x)$ does not depend on $y_{\*x}$.
Therefore, although the expectation over $p(\*x^* \mid \cD_{t-1})$ cannot be computed analytically as with ES \citep{Villemonteix2009-aninformational,Henning2012-Entropy} and PES \citep{Hernandez2014-Predictive}, we do not require any other approximation.

We apply MC estimation to approximate the expectation over $p(\*x^* \mid \cD_{t-1})$ similarly to ES-based methods \citep{Hernandez2014-Predictive,Wang2017-Max,hvarfner2022-joint}.
Therefore, in practice, we use the following approximate AF:
\begin{align*}
    \hat{\alpha}_t (\*x) 
    &= \frac{1}{M_t} \sum_{m = 1}^{M_t} \sigma_t (\*x^*_{t, m} \mid \*x),
\end{align*}
where $M_t \in \NN$ is a user-specified parameter and $\{ \*x^*_{t, m} \}_{m=1}^{M_t}$ is i.i.d. samples from the posterior $p(\*x^* \mid \cD_{t-1})$.
%
%
Since sampling from $p(\*x^* \mid \cD_{t-1})$ is computationally intractable, particularly when $\cX$ is continuous, we employ the approximate posterior sampling method as in prior works \citep{Hernandez2014-Predictive,Wang2017-Max,hvarfner2022-joint}.
Specifically, we employ the method of \citet{Wilson2020-efficiently,wilson2021pathwise}, which combines pathwise conditioning and the random Fourier feature approximation \citep{Rahimi2008-Random}.
Thus, generating (approximate) posterior sample paths and optimizing those, we can obtain $\{ \*x^*_{t, m} \}_{m=1}^{M_t}$.


\subsection{Regularized optimal-point variance reduction}
\label{sec:ROVR}

This section describes ROVR, which is a generalization of OVR that incorporates regularization with respect to the posterior STD.
OVR is a one-step optimal algorithm with respect to minimizing $\EE[\sigma_{t} (\*x^*) \mid \cD_{t}]$.
However, the posterior variance and STD are not submodular in general, as discussed in \citep{takeno2025distributionally}.
Therefore, we cannot obtain multi-step optimality with respect to $\{\*x_1, \dots, \*x_T\}$ based on the submodular property.
Thus, we derive the theoretical guarantee by a slight modification to promote exploration.

Specifically, we consider the following variant of OVR:
\begin{align*}
    \hat{\alpha}_t (\*x) 
    &= \frac{1}{M_t} \sum_{m = 1}^{M_t} \sigma_t (\*x^*_{t, m} \mid \*x) - c_t \sigma_{t-1}(\*x),
\end{align*}
where $c_t \geq 0$ is the regularization parameter, whose theoretical choice will be shown in Section~\ref{sec:analysis_regret}.
Obviously, setting $c_t = 0$ recovers the usual OVR.
If $c_t > 0$, the AF is smaller if $\sigma_{t-1}(\*x)$ is larger.
Therefore, since we minimize the AF, this modification promotes selecting an input with high uncertainty.

\subsection{Similarity to existing BO methods}
\label{sec:alg_comparison}

\paragraph{Similarity to TS.}
When $M_t = 1$ for all $t \in \NN$, OVR is the same as TS since OVR must select $\*x_t = \*x^*_{t, 1}$.
Therefore, if $M_t = 1$ for all $t \in \NN$, OVR obviously achieves the expected BSR and Bayesian cumulative regret bounds as in \citep{russo2018learning,takeno2024-posterior,takeno2026regret}.
For an arbitrary $M_t$, we will show the vanishing BSR bound in Section~\ref{sec:analysis_regret}.
On the other hand, TS often suffers from over-exploration \citep{Shahriari2016-Taking,takeno2024-posterior}.
OVR alleviates this limitation of TS by using $M_t$ MC samples in practice.

\paragraph{Similarity to JES.}
Moreover, OVR is also similar to JES \citep{hvarfner2022-joint}.
Roughly speaking, although OVR minimizes $\EE_{\*x^*} \sbr{\sigma_t (\*x^* \mid \*x) \bigm| \cD_{t-1}}$, JES maximizes $\alpha_{\rm JES}(\*x) \approx \log \rbr{\sigma_t^2 (\*x) + \sigma^2} - \EE_{\*x^*} \sbr{\log \rbr{\sigma_t^2 \bigl(\*x \mid (\*x^*, f^*), f(\*x) \leq f^* \bigr) + \sigma^2} \bigm| \cD_{t-1}}$, where the logarithm comes from the definition of entropy and $\sigma_t^2 \bigl(\*x \mid (\*x^*, f^*), f(\*x) \leq f^* \bigr)$ is the approximate variance given $\*x^*$, $f^* = f(\*x^*)$, and the truncation condition $f(\*x) \leq f^*$.
From this difference, OVR has three advantages over JES.
First, the AF value of JES approaches infinity $\max_{\*x \in \cX} \alpha_{\rm JES}(\*x) \rightarrow \infty$ as $\sigma^2 \rightarrow 0$, since the argument of the logarithm approaches $0$.
Thus, when $\sigma^2$ is sufficiently small, JES shows behavior similar to TS, leading to over-exploration.
Second, the validity of the heuristic variance approximation for $\sigma_t^2 \bigl(\*x \mid (\*x^*, f^*), f(\*x) \leq f^* \bigr)$ is unknown.
Third, the regret guarantee for entropy-based BO methods is still an open problem.
In contrast, OVR alleviates over-exploration by relying on the posterior STD and does not require a heuristic conditional-variance approximation.
In addition, we show that ROVR achieves the vanishing BSR upper bound.

\section{Analysis}
\label{sec:analysis}

This section presents analyses of the MC estimation error and the BSR upper bound.
In this section, we ignore the error of the posterior sampling approximation, as with existing analyses \citep{Russo2014-learning,Kandasamy2018-Parallelised,takeno2024-posterior,takeno2025regretEI}.
This approximation error may be handled, e.g., by the analyses in \citep{Mutny2018-efficient,vakili2021-scalable}.

\subsection{Analysis of Monte Carlo estimation}
\label{sec:analysis_MC}

The MC estimation error for fixed $\*x \in \cX$ can be obtained via a classical concentration inequality.
Since $\sigma(\*x^* \mid \*x) \in [0, 1]$ is a bounded random variable, we can apply Hoeffding's inequality \citep[Theorem 2.2.6 in][]{Vershynin2018-high} shown in Theorem~\ref{thm:hoeffding} in the appendix as follows:
\begin{align*}
    \forall M_t \in \NN, \forall \*x \in \cX, 
    \Pr\rbr{|\alpha_t (\*x) - \hat{\alpha}_t (\*x)| \leq \sqrt{\frac{\log(1 / \delta)}{2M_t}} \biggm| \cD_{t-1}} 
    \geq 1 - \delta,
\end{align*}
for any $\cD_{t-1}$, $\delta \in (0, 1)$, and $c_t \geq 0$.
Therefore, we can obtain the $O(1 / \sqrt{M_t})$ bound, which also achieves logarithmic dependence on the probability $\delta$.
However, in practice, we need to obtain an error bound that holds uniformly for all $\*x \in \cX$ since we compute AF values for many $\*x \in \cX$ to optimize the AF.
For both discrete and continuous domains, we show such uniform error bounds.

If the input domain $\cX$ is finite, we can obtain the following theorem by the union bound:
\begin{restatable}[MC estimation error bound for discrete domains]{theorem}{thmMCErrorDiscrete}
    \label{thm:MCError_discrete}
    Pick $\delta \in (0, 1)$.
    Assume $\cX$ is finite.
    Then, for any $\cD_{t-1}$ and $c_t \geq 1$, we have
    \begin{align*}
        \forall M_t \in \NN, 
        \Pr\rbr{\max_{\*x \in \cX} |\alpha_t(\*x) - \hat{\alpha}_t(\*x)| \leq \sqrt{\frac{\log(|\cX| / \delta)}{2M_t}} \biggm| \cD_{t-1} } 
        \geq 1 - \delta.
    \end{align*}
\end{restatable}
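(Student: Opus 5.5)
The plan is to apply Hoeffding's inequality (Theorem~\ref{thm:hoeffding}) at each fixed $\*x \in \cX$ and combine the $|\cX|$ resulting events by a union bound. Throughout we condition on $\cD_{t-1}$. The inputs $\*x_1,\dots,\*x_{t-1}$, and hence the functions $\sigma_{t-1}(\cdot)$ and $\sigma_t(\cdot \mid \cdot)$, are then deterministic. The only randomness is the i.i.d.\ draws $\*x^*_{t,1},\dots,\*x^*_{t,M_t} \sim p(\*x^* \mid \cD_{t-1})$.

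First I would note that the regularization term plays no role. Whatever the value of $c_t$, the term $c_t\sigma_{t-1}(\*x)$ appears identically in $\alpha_t$ and in $\hat{\alpha}_t$ when both are read as regularized, so it cancels in $\alpha_t(\*x) - \hat{\alpha}_t(\*x)$. We are left with $\EE_{\*x^*}[\sigma_t(\*x^* \mid \*x) \mid \cD_{t-1}] - \frac{1}{M_t}\sum_{m=1}^{M_t} \sigma_t(\*x^*_{t,m} \mid \*x)$. This is why the stated condition on $c_t$ is harmless.

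Next, fix $\*x$ and set $Z_m = \sigma_t(\*x^*_{t,m} \mid \*x)$. These are i.i.d.\ given $\cD_{t-1}$ and have mean $\alpha_t(\*x)$ (unregularized part). They also lie in $[0,1]$: we have $0 \le \sigma_t^2(\*x' \mid \*x) \le \sigma_{t-1}^2(\*x') \le k(\*x',\*x') \le 1$ by Assumption~\ref{assump:Bayesian}, and posterior variances are nonincreasing. The two-sided Hoeffding bound then gives $\Pr(|\alpha_t(\*x)-\hat{\alpha}_t(\*x)| > \epsilon \mid \cD_{t-1}) \le 2\exp(-2M_t\epsilon^2)$.

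Finally, take a union bound over the $|\cX|$ points. Choosing $\epsilon = \sqrt{\log(|\cX|/\delta)/(2M_t)}$ makes the total failure probability at most $|\cX|\delta$ divided by $|\cX|$, up to Hoeffding's constant.

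The only delicate point is that constant factor. The two-sided Hoeffding bound carries a factor $2$, which would give $\log(2|\cX|/\delta)$ rather than $\log(|\cX|/\delta)$. I would first check how Theorem~\ref{thm:hoeffding} is stated in the appendix. The paper's fixed-$\*x$ display already uses $\sqrt{\log(1/\delta)/(2M_t)}$, so I will invoke that display as given, with per-point failure probability $\delta/|\cX|$, and then sum. If the appendix version does carry the factor $2$, the statement holds with $\log(2|\cX|/\delta)$, and I would note this adjustment explicitly.
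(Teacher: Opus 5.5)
Your route is the paper's: condition on $\cD_{t-1}$, note that the $\sigma_t(\*x^*_{t,m}\mid\*x)\in[0,1]$ are i.i.d.\ with mean $\alpha_t(\*x)$, apply Hoeffding at each fixed $\*x$, and take a union bound at level $\delta/|\cX|$. Your reading that the regularizer cancels is also the right one. The hypothesis $c_t\geq 1$ is evidently a typo for $c_t\geq 0$, and the paper's proof never uses $c_t$.

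The factor-$2$ issue you flag is genuine, and the paper does not resolve it either. Theorem~\ref{thm:hoeffding} in the appendix is one-sided, so bounding both tails gives $\Pr(|\alpha_t(\*x)-\hat\alpha_t(\*x)|>\epsilon\mid\cD_{t-1})\leq 2\exp(-2M_t\epsilon^2)$. The paper's proof simply calls the per-point bound with $\log(1/\delta)$ ``trivial''. That is exactly the step you propose to take by invoking the display as given.

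That step is not available, because the per-point display is false in general. Take $M_t=1$ and a summand that equals $0$ or $1$ with probability $1/2$ each. Then $|\alpha_t(\*x)-\hat\alpha_t(\*x)|=1/2$ surely, while $\sqrt{\log(1/\delta)/2}<1/2$ for every $\delta>e^{-1/2}$. This is realizable in the model: take $\cX=\{\*a,\*b\}$, $k(\*a,\*a)=1$, $k(\*b,\*b)=k(\*a,\*b)=0$, $t=1$ with no data, and $\*x=\*b$. Then $\sigma_1(\*a\mid\*b)=1$, $\sigma_1(\*b\mid\*b)=0$, and $\Pr(\*x^*=\*a)=1/2$.

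So what your argument, and the paper's, actually establishes is the uniform bound with $\log(2|\cX|/\delta)$ in place of $\log(|\cX|/\delta)$. State the result that way rather than hedging. The change is immaterial downstream, since later results only use the $O\bigl(\sqrt{\log(|\cX|/\delta)/M_t}\bigr)$ order.
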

See Appendix~\ref{sec:proof_MC_discrete} for the proof.
Therefore, to guarantee the $\epsilon$-accuracy, that is, $\max_{\*x \in \cX} |\alpha(\*x) - \hat{\alpha}(\*x)| \leq \epsilon$ with some $\epsilon > 0$, we need $M_t = \Omega\rbr{1 / \epsilon^2}$.

If the input domain $\cX$ is continuous, we cannot apply the union bound so that the error bounds hold for all $\*x \in \cX$.
Thus, to obtain the uniform error bound, we need some smoothness condition.
To derive such a condition, we utilize the following lemma:
\begin{lemma}[Theorem~E.4 in \citep{Kusakawa2022-bayesian}]
    Let $k(\*x, \*x^\prime): \RR^d \times \RR^d \to \RR$ be linear, SE, or Mat\'{e}rn-$\nu$ kernel and $k(\*x, \*x) \leq 1$ for all $\*x \in \cX$. 
    Moreover, assume that a noise variance $\sigma^2$ is positive.
    Then, for any $t \geq 1$ and $\cD_{t}$, the posterior standard deviation $\sigma_{t} (\*x )$ satisfies that 
    \begin{align*}
        \forall \*x,\*x^\prime \in \RR^d, \ | \sigma_{t} (\*x ) - \sigma_{t} (\*x^\prime ) | \leq L_{\sigma} \| \*x - \*x^\prime \|_1,
    \end{align*}
    where $L_{\sigma}$ is a positive constant given by 
    \begin{align*}
        L_{\sigma} = \left\{
        \begin{array}{ll}
            1 & \text{if $k(\*x,\*x^\prime)$ is the linear kernel}, \\
            \frac{\sqrt{2}}{\ell} & \text{if $k(\*x,\*x^\prime)$ is the SE kernel}, \\
            \frac{\sqrt{2}}{\ell}
            \sqrt{\frac{\nu}{\nu-1} } & \text{if $k(\*x,\*x^\prime)$ is the Mat\'{e}rn kernel with $\nu > 1$}.
        \end{array}\right.
    \end{align*}
    \label{lem:Lipschitz_posterior_std}
\end{lemma}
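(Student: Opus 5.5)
The plan is to use the RKHS feature-map representation of the posterior covariance. Write the posterior variance as a norm. The posterior kernel $k_t$ built from $\cD_t$ is itself a positive semidefinite kernel, with $k_t(\*x,\*x)=\sigma_t^2(\*x)$. Take a feature map $\phi_t$ with $k_t(\*x,\*x^\prime)=\langle\phi_t(\*x),\phi_t(\*x^\prime)\rangle$; then $\sigma_t(\*x)=\|\phi_t(\*x)\|$. The reverse triangle inequality gives
\begin{align*}
|\sigma_t(\*x)-\sigma_t(\*x^\prime)| \le \|\phi_t(\*x)-\phi_t(\*x^\prime)\| = \sqrt{k_t(\*x,\*x)-2k_t(\*x,\*x^\prime)+k_t(\*x^\prime,\*x^\prime)}.
\end{align*}
The right-hand side is the posterior standard deviation of the increment $f(\*x)-f(\*x^\prime)$ given $\cD_t$. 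Conditioning on data can only shrink the variance of any fixed linear functional of $f$. It is therefore bounded by the prior quantity
\begin{align*}
\sqrt{k(\*x,\*x)-2k(\*x,\*x^\prime)+k(\*x^\prime,\*x^\prime)}.
\end{align*}
This makes the bound uniform in $t$ and $\cD_t$. It then remains to bound the prior semi-metric by $L_\sigma\|\*x-\*x^\prime\|_1$ for each kernel.

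\textbf{Linear kernel.} Here $k(\*x,\*x^\prime)=\*x^\top\*x^\prime$, so the quantity equals $\|\*x-\*x^\prime\|_2$, which is at most $\|\*x-\*x^\prime\|_1$. This gives $L_\sigma=1$. Any scaling is absorbed by the condition $k\le1$.

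\textbf{Stationary kernels.} For the SE and Matérn kernels, $k(\*x,\*x)=1$, so the quantity equals $\sqrt{2(1-k(\*x,\*x^\prime))}$, which depends only on $\rho=\|\*x-\*x^\prime\|_2$.
\begin{itemize}
\item For SE, use $1-e^{-u}\le u$ to get $2(1-k)\le \rho^2/\ell^2$, hence a bound $\rho/\ell\le\|\*x-\*x^\prime\|_1/\ell$. This is even better than $\sqrt2/\ell$.
\item For Matérn, I would use the spectral representation $1-k(\rho)=\int(1-\cos(\omega^\top \Delta))\,dS(\omega)$ together with $1-\cos z\le z^2/2$. This gives $2(1-k)\le \rho^2\int\|\omega\|^2 dS/d$ in an averaged sense. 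Equivalently, $2(1-k)\le -k''(0)\rho^2$ whenever the second spectral moment is finite.
\end{itemize}

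The main obstacle is the Matérn constant. The second spectral moment is finite exactly when $\nu>1$, and it equals $-k''(0)=\nu/((\nu-1)\ell^2)$ in the standard one-dimensional parametrization. This yields a bound of the form $\rho\sqrt{\nu/(\nu-1)}/\ell$, which is dominated by the stated $\frac{\sqrt2}{\ell}\sqrt{\nu/(\nu-1)}\,\|\*x-\*x^\prime\|_1$. I would verify this second-derivative computation from the Bessel-function form via the series expansion of $x^\nu K_\nu(x)$ near $0$. A cruder route, using $1-k(\rho)\le \sup|k''|\rho^2/2$ with $k''$ maximized at $0$, gives the same constant. The factor $\sqrt2$ leaves slack for the $\ell_2$-to-$\ell_1$ conversion and for any differing lengthscale convention.
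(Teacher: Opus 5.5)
The paper does not prove this lemma. It imports it verbatim as Theorem~E.4 of \citet{Kusakawa2022-bayesian}, so your proposal supplies a self-contained proof where the paper only cites one, and the proof is correct. Your reduction
\[
|\sigma_t(\*x)-\sigma_t(\*x^\prime)| \le \sqrt{k_t(\*x,\*x)-2k_t(\*x,\*x^\prime)+k_t(\*x^\prime,\*x^\prime)} \le \sqrt{k(\*x,\*x)-2k(\*x,\*x^\prime)+k(\*x^\prime,\*x^\prime)}
\]
is sound. The second step is just $\bigl(\*k_t(\*x)-\*k_t(\*x^\prime)\bigr)^\top(\*K_t+\sigma^2\*I_t)^{-1}\bigl(\*k_t(\*x)-\*k_t(\*x^\prime)\bigr)\ge 0$.

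What your route buys over the citation is transparency. It shows why $L_\sigma$ does not depend on $t$, on $\cD_t$, or on the noise level; the noise level matters only in making the inverse well defined. It also shows that under this paper's parametrizations the sharper constants $1/\ell$ and $\sqrt{\nu/(\nu-1)}/\ell$ already hold. These parametrizations are $\|\*x-\*x^\prime\|_2^2/(2\ell^2)$ for SE and $\sqrt{2\nu}\|\*x-\*x^\prime\|_2/\ell$ for Mat\'ern. So the extra factor $\sqrt{2}$ in the statement is slack.

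\textbf{Mat\'ern moment.} You leave $-k''(0)=\nu/((\nu-1)\ell^2)$ to be checked. The quickest check is through the normalized spectral density, which is proportional to $(2\nu/\ell^2+\|\*\omega\|_2^2)^{-(\nu+d/2)}$. This is a $d$-variate Student-$t$ law with $2\nu$ degrees of freedom, scaled by $1/\ell$, so its covariance is $\frac{\nu}{(\nu-1)\ell^2}\*I$ for $\nu>1$. Combined with $1-\cos z\le z^2/2$, this gives
\[
1-k(\*x,\*x^\prime)\le \frac{\nu}{2(\nu-1)\ell^2}\|\*x-\*x^\prime\|_2^2
\]
directly. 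By isotropy, $\int(\*\omega^\top(\*x-\*x^\prime))^2\,dS=\|\*x-\*x^\prime\|_2^2\int\|\*\omega\|_2^2\,dS/d$ holds exactly, so no ``averaged sense'' qualifier is needed.

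\textbf{Linear kernel.} Your remark that any scaling of the linear kernel is absorbed by $k\le 1$ is not right in general. For $c\,\*x^\top\*x^\prime$, the constraint $k(\*x,\*x)\le 1$ on $\cX$ need not force $c\le 1$. The remark is also unnecessary, since the lemma concerns $\*x^\top\*x^\prime$.
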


By showing the Lipschitz continuity of the AF (Lemma~\ref{lemLipschitzContinuityAF}) from Lemmas~\ref{lem:Lipschitz_posterior_std} and \ref{lemVakiliProp1} (Proposition 1 in \citep{vakili2021-optimal}), we can obtain the following result for continuous domains:
\begin{restatable}[MC estimation error bound for continuous domains]{theorem}{thmMCErrorContinuous}
    \label{thm:MCError_continuous}
    Pick $\delta \in (0, 1)$.
    Suppose that Assumption~\ref{assump:MCEstimation_continuous} and the same premise as in Lemma~\ref{lem:Lipschitz_posterior_std} hold.
    Then, for any $\cD_{t-1}$, $M_t \in \NN$, and $c_t \geq 0$, we have
    \begin{align}
        \Pr\rbr{\max_{\*x \in \cX} |\alpha_t(\*x) - \hat{\alpha}_t(\*x)| \leq \sqrt{\frac{d \log(2dr \sqrt{M_t} L_{\alpha_t}) - \log \delta}{2M_t}} + \frac{1}{\sqrt{M_t}} \biggm| \cD_{t-1} } 
        \geq 1 - \delta,
    \end{align}
    and
    \begin{align}
        \EE \sbr{\max_{\*x \in \cX} |\alpha_t(\*x) - \hat{\alpha}_t(\*x)| \biggm| \cD_{t-1}}
        \leq 
        \sqrt{\frac{d \log(2dr \sqrt{M_t} L_{\alpha_t}) + 1}{2M_t}} + \frac{1}{\sqrt{M_t}},
    \end{align}
    where $L_{\alpha_t} = \rbr{\frac{L_k}{\sigma} \rbr{1 + \frac{\sqrt{t}}{\sigma}} + \frac{L_{\sigma}}{\sigma^2}}$.
\end{restatable}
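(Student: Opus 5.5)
We follow the standard discretization-plus-Lipschitz argument: combine the pointwise Hoeffding bound with a union bound over a finite grid, then use Lipschitz continuity to pass from the grid to all of $\cX$.

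\textbf{Step 1: Lipschitz continuity.} First we show that, for every fixed realization $\*x^*$, the map $\*x \mapsto \sigma_t(\*x^* \mid \*x)$ is $L_{\alpha_t}$-Lipschitz in $\|\cdot\|_1$. This is the content of Lemma~\ref{lemLipschitzContinuityAF}, which we prove as follows. Write
\[
\sigma_t^2(\*x^* \mid \*x) = \sigma_{t-1}^2(\*x^*) - \frac{k_{t-1}^2(\*x,\*x^*)}{\sigma_{t-1}^2(\*x)+\sigma^2}.
\]
Since $\sigma_t(\*x^*\mid\*x)\ge \sigma\,|k_{t-1}(\*x,\*x^*)|/\ldots$ is awkward, we instead go through $\sigma_t(\*x^* \mid \*x)$ as the posterior STD after adding $\*x$. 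We bound its variation by two contributions:
\begin{itemize}
\item \emph{Posterior covariance term.} The variation of $k_{t-1}(\cdot,\*x^*)$ is controlled using Assumption~\ref{assump:MCEstimation_continuous} together with Lemma~\ref{lemVakiliProp1}, which bounds $\|(\*K_{t-1}+\sigma^2\*I)^{-1}\*k_{t-1}(\*x^*)\|_1$ by roughly $\sqrt{t}/\sigma$. This yields a Lipschitz constant $L_k(1+\sqrt{t}/\sigma)$ for $k_{t-1}(\cdot,\*x^*)$.
\item \emph{Denominator term.} The variation of $\sigma_{t-1}^2(\cdot)+\sigma^2$ is controlled by Lemma~\ref{lem:Lipschitz_posterior_std}.
\end{itemize}
Dividing by the denominator, which is at least $\sigma^2$, and using the elementary inequality $|\sqrt a-\sqrt b|\le |a-b|/(\sqrt a+\sqrt b)$, we obtain the constant $\frac{L_k}{\sigma}(1+\frac{\sqrt t}{\sigma})+\frac{L_\sigma}{\sigma^2}$. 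The term $c_t\sigma_{t-1}(\*x)$ appears in both $\alpha_t$ and $\hat\alpha_t$, so it cancels in the difference and $c_t$ plays no role. Since $\alpha_t$ is an average and $\hat\alpha_t$ an empirical average of $L_{\alpha_t}$-Lipschitz functions, both are $L_{\alpha_t}$-Lipschitz, and so is their difference, with constant at most $2L_{\alpha_t}$.

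\textbf{Step 2: Discretization and union bound.} Take a grid $\cX_\tau \subset [0,r]^d$ with mesh $\tau$ in each coordinate, so that $|\cX_\tau| \le (r/\tau)^d$ and every $\*x\in\cX$ has a grid point $[\*x]$ with $\|\*x-[\*x]\|_1\le d\tau$. Hoeffding's inequality with a union bound over $\cX_\tau$, exactly as in Theorem~\ref{thm:MCError_discrete}, gives with probability at least $1-\delta$
\[
\max_{\*x\in\cX_\tau}|\alpha_t-\hat\alpha_t| \le \sqrt{\frac{d\log(r/\tau)-\log\delta}{2M_t}}.
\]
Then, for every $\*x\in\cX$,
\[
|\alpha_t(\*x)-\hat\alpha_t(\*x)| \le |\alpha_t([\*x])-\hat\alpha_t([\*x])| + 2L_{\alpha_t} d\tau.
\]
Choosing $\tau = 1/(2dL_{\alpha_t}\sqrt{M_t})$ makes the discretization term equal to $1/\sqrt{M_t}$, and the log term becomes $d\log(2dr\sqrt{M_t}L_{\alpha_t})$. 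This gives the first claim.

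\textbf{Step 3: Expectation bound.} Integrate the tail bound. Write $Z=\max_{\*x\in\cX}|\alpha_t-\hat\alpha_t|$, $A = d\log(2dr\sqrt{M_t}L_{\alpha_t})$, and $D=Z-1/\sqrt{M_t}$. The first claim gives $\Pr(D>u)\le \exp(A-2M_tu^2)$. Hence
\[
\EE[D] \le u_0 + \int_{u_0}^\infty e^{A-2M_tu^2}\,du, \qquad u_0=\sqrt{A/(2M_t)}.
\]
Using the standard Gaussian tail estimate, or more simply $\EE[D]\le\sqrt{\EE[D_+^2]}$ with $\EE[D_+^2]\le (A+1)/(2M_t)$ (obtained from $\Pr(D_+^2>s)\le \min(1,e^{A-2M_ts})$ integrated over $s$), we get $\sqrt{(A+1)/(2M_t)}$. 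This is the second claim.

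\textbf{Main obstacle.} We expect Step 1 to be the hardest part. Controlling the Lipschitz constant of $k_{t-1}(\cdot,\*x^*)$ uniformly in the data requires the norm bound on $(\*K_{t-1}+\sigma^2\*I)^{-1}\*k_{t-1}$ from Lemma~\ref{lemVakiliProp1}. The square root must also be handled carefully near $\sigma_t(\*x^*\mid\*x)=0$; we plan to avoid this by bounding the difference of STDs via the difference of variances divided by quantities bounded below using $\sigma^2$.
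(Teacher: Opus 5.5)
Your architecture is the paper's: Lipschitz continuity of $\*x\mapsto\sigma_t(\*x^*\mid\*x)$, then Hoeffding with a union bound over a grid, then integration of the tail. However, Step~1 has a genuine gap exactly where you flag it. Write $a=\sigma_{t-1}^2(\*x^*)$ and $g(\*x)=k_{t-1}(\*x,\*x^*)/\sqrt{\sigma_{t-1}^2(\*x)+\sigma^2}$, so that $\sigma_t^2(\*x^*\mid\*x)=a-g(\*x)^2$. Dividing the variance difference by $\sigma_t(\*x^*\mid\*x)+\sigma_t(\*x^*\mid\*x')$ does not give a denominator bounded below in terms of $\sigma$. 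The best uniform lower bound is $\sigma_t(\*x^*\mid\*x)\ge\sigma\sqrt{a}/\sqrt{1+\sigma^2}$, which degenerates as $\sigma_{t-1}(\*x^*)\to0$. The missing idea is to factor the numerator, $|g(\*x)^2-g(\*x')^2|\le|g(\*x)-g(\*x')|\,(|g(\*x)|+|g(\*x')|)$, and to use $|k_{t-1}(\*x,\*x^*)|\le\sigma_{t-1}(\*x)\sqrt{a}$. This gives $|g(\*x)|/\sigma_t(\*x^*\mid\*x)\le\sigma_{t-1}(\*x)/\sigma\le1/\sigma$, so the factor $\sqrt{a}$ cancels and $|\sigma_t(\*x^*\mid\*x)-\sigma_t(\*x^*\mid\*x')|\le\sigma^{-1}|g(\*x)-g(\*x')|$. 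Your two-term estimate (Lemma~\ref{lemVakiliProp1} for the covariance, Lemma~\ref{lem:Lipschitz_posterior_std} for the denominator) does show that $g$ is $L_{\alpha_t}$-Lipschitz. So this route proves the constant $L_{\alpha_t}/\sigma$, not $L_{\alpha_t}$. Since $1/\sigma$ is exactly the Lipschitz constant of $g\mapsto\sqrt{a-g^2}$ on the attainable range $g^2\le a/(1+\sigma^2)$, the factor cannot be removed at this step when $\sigma<1$.

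You also cannot close the gap by copying the paper's proof of Lemma~\ref{lemLipschitzContinuityAF}. It bounds $\sqrt{a-c}-\sqrt{a-b}$ by $\sqrt{b-c}$ and then writes $\sqrt{b}-\sqrt{c}$. But $\sqrt{b}-\sqrt{c}\le\sqrt{b-c}$, so that step runs the wrong way; it fails, e.g., for $c\uparrow b$ with $b>a/2$, which is attainable when $\sigma<1$. Once repaired, your argument proves the theorem with $L_{\alpha_t}/\sigma$ in place of $L_{\alpha_t}$ inside the logarithm. The $O(\log M_t/\sqrt{M_t})$ rate survives, but the stated constant needs an extra argument when $\sigma<1$.

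The rest is sound. Your remark that $E_t$ is only $2L_{\alpha_t}$-Lipschitz is more careful than the paper, whose lemma asserts $L_{\alpha_t}$. You absorb that $2$ with a grid on $[0,r]^d$, whereas the paper's $2$ in $2dr\sqrt{M_t}L_{\alpha_t}$ comes from placing the net inside $\cX$ via Lemma~\ref{lem:CN_monotonicity}. Assumption~\ref{assump:MCEstimation_continuous} only controls $k(\cdot,\*x)$ on $\cX$, so your grid points outside $\cX$ need a justification (automatic for SE and Mat\'ern kernels). A net inside $\cX$ costs a further factor $2$ in the logarithm. Your layer-cake bound $\EE[D_+^2]\le(A+1)/(2M_t)$ is equivalent to, and slightly cleaner than, the paper's quantile coupling with $-\log U\sim\mathrm{Exp}(1)$ and Jensen's inequality.
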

See Appendix~\ref{sec:proof_MC_continuous} for the proof.
Thus, also for continuous input domains, we obtained the uniform error bound, which implies that we need $M_t = \Omega\rbr{\log(t / \epsilon) / \epsilon^2}$ for the $\epsilon$-accuracy with $\epsilon \in (0, 1)$.

\subsection{Regret analysis}
\label{sec:analysis_regret}

Our ROVR achieves the following BSR bound:
\begin{restatable}[BSR bound]{theorem}{thmBSRBound} \label{thmBSRBound}
    Suppose that, in addition to Assumption~\ref{assump:Bayesian}, $|\cX| < \infty$ or Assumption~\ref{assump:Bayesian_continuous} holds.
    Then, if Algorithm~\ref{alg:OVR} runs with non-increasing sequence $\{ c_t \}_{t \in \NN}$ such that $c_t > 0$ for all $t \in \NN$, the following bound holds:
    \begin{align*}
        {\rm BSR}_T \leq \sqrt{ \frac{C_1 \beta_T \gamma_T}{T}} + O\rbr{\frac{\beta_T^{1/2}}{T}} + O\rbr{\frac{1}{T c_T}} + O \rbr{\frac{1}{T} \sum_{t=1}^T \frac{\log M_t}{\sqrt{M_t} c_t} },
    \end{align*}
    where $C_1 = 2 / \log \rbr{1 + \sigma^{-2}}$ and $\beta_T = 2 \log(|\cX|T / \sqrt{2\pi})$ if $|\cX| < \infty$ or $\beta_T = 2 \log(|2drTL|^d T / \sqrt{2\pi})$ if Assumption~\ref{assump:Bayesian_continuous} holds, where $L = \max\{L_{\sigma}, b\rbr{\sqrt{\log(2ad)} + \sqrt{\pi}/2} \}$ with $L_{\sigma}$ defined as in Lemma~\ref{lem:Lipschitz_posterior_std}.
\end{restatable}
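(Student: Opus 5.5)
We start from the reduction already sketched in Section~\ref{sec:algorithm}, following \citet{Russo2014-learning}. Since $\hat{\*x}_T$ maximizes $\mu_T$, for any $\cD_T$ we have $\EE[f(\*x^*) - f(\hat{\*x}_T) \mid \cD_T] \le \EE[f(\*x^*) - \mu_T(\*x^*) \mid \cD_T]$. For finite $\cX$, the Gaussian tail bound with a union bound over $\cX$ gives
\begin{align*}
\EE[f(\*x^*) - \mu_T(\*x^*) \mid \cD_T] \le \beta_T^{1/2}\EE[\sigma_T(\*x^*) \mid \cD_T] + O(1/T).
\end{align*}
For continuous $\cX$, we discretize with mesh size of order $1/(drTL)$. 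Assumption~\ref{assump:Bayesian_continuous} and Lemma~\ref{lem:Lipschitz_posterior_std} control the discretization error of $f$, $\mu_T$ and $\sigma_T$, contributing the $O(\beta_T^{1/2}/T)$ term. This is where the stated $\beta_T$ comes from. It therefore suffices to bound $\EE[\sigma_T(\*x^*)]$. Because $\sigma_t$ is nonincreasing in $t$, we will bound it by the average $\frac{1}{T}\sum_{t=1}^T \EE[\sigma_t(\*x^* \mid \*x_t)]$. Here $\sigma_t(\*x^*\mid\*x_t)$ denotes the posterior STD after the $t$-th query, which upper bounds $\sigma_T(\*x^*)$.

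The core step is a per-iteration inequality from the ROVR selection rule. Let $\tilde{\*x}_t$ be an independent posterior sample of $\*x^*$, i.e.\ a TS point. By optimality of $\*x_t$ for $\hat\alpha_t$,
\begin{align*}
\hat\alpha_t(\*x_t) - c_t\sigma_{t-1}(\*x_t) \le \hat\alpha_t(\tilde{\*x}_t) - c_t\sigma_{t-1}(\tilde{\*x}_t).
\end{align*}
Replacing $\hat\alpha_t$ by $\alpha_t$ costs $2\max_{\*x}|\alpha_t-\hat\alpha_t|$, whose conditional expectation is $O(\log M_t/\sqrt{M_t})$ (up to $\log t$ factors) by Theorems~\ref{thm:MCError_discrete} and \ref{thm:MCError_continuous}. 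We then use two facts. First, $\alpha_t(\tilde{\*x}_t) = \EE_{\*x^*}[\sigma_t(\*x^*\mid\tilde{\*x}_t)]$, and in expectation over $\tilde{\*x}_t$ this term is at most $\EE[\sigma_t(\*x^*\mid \*x^*)]$-like quantities. Second, $\sigma_t(\tilde{\*x}_t\mid\tilde{\*x}_t) \le \sigma\,\sigma_{t-1}(\tilde{\*x}_t)/\sqrt{\sigma^2+\sigma_{t-1}^2}$. Combining these and rearranging should give
\begin{align*}
c_t\,\EE[\sigma_{t-1}(\*x^*)] \le c_t\,\EE[\sigma_{t-1}(\*x_t)] + \EE[\alpha_t(\*x_t)] - \EE[\alpha_t(\tilde{\*x}_t)] + O\rbr{\frac{\log M_t}{\sqrt{M_t}}},
\end{align*}
using that $\tilde{\*x}_t$ and $\*x^*$ share the same law given $\cD_{t-1}$. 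Dividing by $c_t$, we must show that the telescoping-type difference $\EE[\alpha_t(\*x_t)-\alpha_t(\tilde{\*x}_t)]$ sums to $O(1)$. Since $\alpha_t(\*x_t) = \EE[\sigma_t(\*x^*)\mid\cD_{t-1}]$, evaluated after the query, and $\alpha_t(\tilde{\*x}_t) \ge \EE[\sigma_{t}(\*x^*)\mid \cD_{t-1}\cup\{\tilde{\*x}_t\}]$ is bounded below by the next-step value, we would try to make this telescope. With $c_t$ non-increasing, Abel summation then gives a total of order $1/c_T$, i.e.\ $O(1/(Tc_T))$ after averaging.

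The remaining term $\frac{1}{T}\sum_t \EE[\sigma_{t-1}(\*x_t)]$ is handled by Cauchy--Schwarz and the standard information-gain bound $\sum_t \sigma_{t-1}^2(\*x_t) \le C_1\gamma_T$. This yields $\sqrt{C_1\gamma_T/T}$, and multiplying by $\beta_T^{1/2}$ gives the leading term. The main obstacle is the middle step: making the difference of $\alpha_t$ values telescope despite the lack of submodularity. We would first try to bound $\alpha_t(\*x_t)$ above by $\alpha_{t+1}$-type quantities via monotonicity of the posterior variance. If exact telescoping fails, we would instead absorb the difference using the crude bound $\sigma\le1$ weighted by $1/c_t$, which would still produce the $O(1/(Tc_T))$ term.
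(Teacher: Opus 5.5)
Your outer structure matches the paper: the Russo-style reduction to $\beta_T^{1/2}\EE[\sigma_T(\*x^*)]$, the discretization for continuous $\cX$, averaging via monotonicity, and Cauchy--Schwarz with $\sum_t\sigma_{t-1}^2(\*x_t)\le C_1\gamma_T$. The core per-iteration step, however, is not established. First, your displayed inequality has the $\alpha$-difference with the wrong sign. Optimality of $\*x_t$ against $\tilde{\*x}_t$, together with the MC error, gives
\begin{align*}
c_t\,\sigma_{t-1}(\tilde{\*x}_t) \le c_t\,\sigma_{t-1}(\*x_t) + \alpha_t(\tilde{\*x}_t) - \alpha_t(\*x_t) + 2\max_{\*x \in \cX}\left|\alpha_t(\*x) - \hat\alpha_t(\*x)\right|,
\end{align*}
so what must be summed is $c_t^{-1}\EE[\alpha_t(\tilde{\*x}_t)-\alpha_t(\*x_t)]$. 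Second, you leave this summation open, and neither proposed fix works. Bounding $\alpha_t(\*x_t)$ from above by $\alpha_{t+1}$-type quantities is the wrong direction, since with the correct sign it enters negatively. The fallback of bounding each difference by $1$ yields $\frac{1}{T}\sum_{t=1}^T c_t^{-1}\ge 1/c_1$, a non-vanishing constant that is then multiplied by $\beta_T^{1/2}$, not $O(1/(Tc_T))$.

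The missing idea is elementary and needs no submodularity. Since an additional observation cannot increase the posterior variance,
\begin{align*}
\alpha_t(\*x) = \EE\sbr{\sigma_t(\*x^* \mid \*x) \mid \cD_{t-1}} \le \EE\sbr{\sigma_{t-1}(\*x^*) \mid \cD_{t-1}} \quad \text{for every } \*x \in \cX,
\end{align*}
in particular at $\*x=\tilde{\*x}_t$. On the other side, $\EE[\alpha_t(\*x_t)]=\EE[\sigma_t(\*x^*)]$ exactly. This holds because $\*x_t$ is a function of $\cD_{t-1}$ and the MC samples, which are independent of $\*x^*$ given $\cD_{t-1}$. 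Hence the increment is at most $a_{t-1}-a_t$ with $a_t=\EE[\sigma_t(\*x^*)]$, which is nonincreasing in $t$ and lies in $[0,1]$. As $c_t^{-1}\le c_T^{-1}$ for $t\le T$, we get $\sum_{t=1}^T c_t^{-1}(a_{t-1}-a_t)\le (a_0-a_T)/c_T\le 1/c_T$.

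This is exactly the paper's argument, with one difference: the paper takes the true maximizer $\*x^*\in\cX$ itself as the comparison point in the argmin, pointwise, and bounds $\hat\alpha_t(\*x^*)\le\frac{1}{M_t}\sum_m\sigma_{t-1}(\*x^*_{t,m})$. Your independent Thompson sample $\tilde{\*x}_t$ works equally well once the sign is corrected, using that $\tilde{\*x}_t$ and $\*x^*$ have the same law given $\cD_{t-1}$.

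The two auxiliary ``facts'' in your proposal are not needed, and the first, read literally, is false. Take two independent inputs that are equally likely to be optimal: querying an independent sample leaves $\sigma_t(\*x^*)$ unchanged half the time, so $\EE[\sigma_t(\*x^*\mid\tilde{\*x}_t)]>\EE[\sigma_t(\*x^*\mid\*x^*)]$.
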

See Appendix~\ref{sec:proof_regret} for the detailed proof, though we show a proof sketch below.

\paragraph{Proof sketch.}
For simplicity, we consider the case $|\cX| < \infty$ here.
By the standard arguments from \citet{Russo2014-learning}, we can see that
\begin{align*}
    {\rm BSR}_T
    &\leq \beta_T^{1/2} \EE \sbr{\sigma_{T}(\*x^*)} + \frac{1}{T},
\end{align*}
where $\beta_T = O(\log(|\cX|t))$.
Then, from $\EE [\sigma_1(\*x^*)] \geq \dots \geq \EE [\sigma_t(\*x^*)] \dots \geq \EE [\sigma_T(\*x^*)]$, which also used in \citep{nava2022diversified}, we have
\begin{align*}
    {\rm BSR}_T
    &\leq \frac{\beta_T^{1/2}}{T} \sum_{t=1}^T \EE \sbr{\sigma_{t-1}(\*x^*)} + \frac{1}{T}.
\end{align*}
From the definition of ROVR's AF, for all $t \in [T]$, we can obtain
\begin{align*}
    \EE \sbr{\sigma_{t-1}(\*x^*)}
    &\leq 
    \EE\sbr{\sigma_{t-1}(\*x_t) }
    + \frac{1}{c_t}\rbr{\EE \sbr{ \sigma_{t-1}(\*x^*)} - \EE \sbr{ \sigma_{t}(\*x^*)}}
    + O\rbr{\frac{\log M_t}{\sqrt{M_t} c_t}},
\end{align*}
where the third term in the right hand side implies the MC estimation error bounds shown in Theorems~\ref{thm:MCError_discrete} and \ref{thm:MCError_continuous}.
Finally, by bounding $\sum_{t=1}^T \frac{1}{c_t}\rbr{\EE \sbr{ \sigma_{t-1}(\*x^*)} - \EE \sbr{ \sigma_{t}(\*x^*)}}$ as a telescoping series and applying Lemma 5.4 in \citep{Srinivas2010-Gaussian} to $\EE\sbr{\sum_{t=1}^T \sigma_{t-1}(\*x_t) }$, we can obtain the desired result.

\paragraph{Convergence of our BSR bound.}
If $M_t$ and $c_t$ are sufficiently large, the dominant term in the BSR upper bound is $\sqrt{ C_1 \beta_T \gamma_T / T}$ and converges to $0$.
For example, if we set $M_t = \Theta(t)$ and $c_t = \Omega(1 / \log t)$, we see that $1 / (T c_T) = O(\log T / T)$ and $\frac{1}{T} \sum_{t=1}^T \frac{\log M_t}{\sqrt{M_t} c_t} = O\rbr{\frac{(\log T)^2}{\sqrt{T}}}$.
As discussed in Section~\ref{sec:preliminary}, the MIG for commonly used kernel functions satisfies, e.g., $\gamma_T = O\rbr{(\log T)^{d+1}}$ for SE kernels and $\gamma_T = O\rbr{T^{\frac{d}{2\nu + d}}}$ for Mat\'ern kernels.
Hence, in this case, the dominant term is $\sqrt{C_1 \beta_T \gamma_T / T}$ (except for the case of SE kernels and $d = 1$).
Furthermore, $\sqrt{C_1 \beta_T \gamma_T / T}$ converges to $0$ since $\gamma_T \log T = o(T)$.
Consequently, if we choose sufficiently large $M_t$ and $c_t$, our BSR upper bound must converge to $0$.

\paragraph{Comparison with existing bounds.}
Several studies show a tighter $O(\sqrt{T})$ (high-probability) Bayesian cumulative regret (BCR) bound \citep{iwazaki2025improved,takeno2026regret}.
However, our BSR bound has the same rate as the most existing expected BSR bounds in \citep{Russo2014-learning,Kandasamy2018-Parallelised,Takeno2023-randomized,takeno2024-posterior,takeno2025-regret,takeno2025regretEI}.
Furthermore, \citet{iwazaki2025improved,takeno2026regret} stated that showing $O(\sqrt{T})$ expected BCR bounds is an open problem.
Thus, our ROVR achieves the upper bounds comparable to the best-known ones for the expected BSR.

Showing upper bounds on cumulative regret for ROVR is not straightforward.
Given that ROVR aims to minimize the posterior STD in the next step, it is difficult to obtain an instantaneous regret bound, unlike bandit-based BO algorithms, such as UCB and TS.
We believe that this is a common difficulty for the one-step lookahead AFs.
Thus, developing the one-step lookahead BO with sublinear cumulative regret bounds is an intriguing direction for future work.

\section{Experiments}
\label{sec:experiments}

We conducted the numerical experiments using synthetic functions generated from GPs, benchmark functions shown in \url{https://www.sfu.ca/~ssurjano/optimization.html}, and the emulators proposed by \citet{hase2021olympus}.
We compare our OVR and ROVR with EI \citep{Mockus1978-Application}, UCB \citep{Srinivas2010-Gaussian}, TS \citep{Russo2014-learning}, PI from the maximum of sample path (PIMS) \citep{takeno2024-posterior}, JES \citep{hvarfner2022-joint}, and MES \citep{Wang2017-Max}.
Note that we do not employ the $\epsilon$-greedy modification of JES \citep{hvarfner2022-joint} to focus on the difference of AFs.
For TS, PIMS, JES, MES, OVR, and ROVR, we employ the posterior sampling approximation in \citep{Wilson2020-efficiently,wilson2021pathwise}.
We used GPy \citep{GPy2014} and Scipy \citep{SciPy2020} for implementations.
For sensitivity to the number of MC samples, we show the synthetic function experiment results changing the number of MC samples in Appendix~\ref{sec:additional_experiments}.

\subsection{Synthetic function experiments}

We used synthetic functions generated from GPs that match the assumptions of our analysis.
We set $\cX = \{0.0, 0.1, \dots, 0.9 \}^d$ and $d = 4$.
We employed the SE and Mat\'ern kernels with $\nu = 5/2$ and $\ell = 0.3$.
Furthermore, we actually added observation noise $\epsilon \sim \cN(0, \sigma^2)$ changing $\sigma^2 \in \{10^{-4}, 1 \}$.
We fixed the GP model's hyperparameters to match the parameters used to generate the objective functions and observation noise.
We set an initial dataset to data closest to $5$ data points, generated randomly using a Sobol sequence.
We set $\beta_t$ as the theoretical value $\beta_t = 2\log(|\cX|t^2 / \sqrt{2 \pi} + 1)$ for UCB \citep{Takeno2023-randomized} and $c_t = 0.1 \ln^{-d} (e + t)$ for ROVR.
Finally, we measured optimization performance by the simple regret $f(\*x^*) - f(\hat{\*x}_t)$, where $\hat{\*x}_t = \argmax_{\*x \in \cX} \mu_t(\*x)$.

Figure~\ref{fig:exp_synthetic} shows the average and standard error of simple regret over $16$ random trials.
In these experiments, OVR and ROVR show similar results.
When $\sigma^2 = 10^{-4}$, OVR and ROVR consistently outperform MES, JES, and UCB and are comparable to EI, TS, and PIMS.
When $\sigma^2 = 1$, OVR and ROVR outperform other methods for the SE kernels and are comparable to most other methods for Mat\'ern kernels.
Consequently, OVR and ROVR show superior performance, particularly among one-step lookahead methods, such as MES and JES.

\subsection{Benchmark function and emulator experiments}

We used the benchmark functions (Ackely, Hartmann3, Hartmann4, and Hartmann6) and the emulators \citep{hase2021olympus} (colors\_bob, hplc, snar, and suzuki).
%
%
We employed the SE kernel.
Furthermore, we actually added observation noise $\epsilon \sim \cN(0, \sigma^2)$ with small noise variance $\sigma^2 = 10^{-8}$.
We optimize the hyperparameters of the GP model per $5$ iterations.
We set an initial dataset of $5d$ data points, generated randomly using a Sobol sequence.
We set $\beta_t$ as the heuristic value $\beta_t = 0.2 \ln (2 (t+1))$ \citep{kandasamy2015-high} for UCB and set $c_t = 0.1 \ln^{-d} (e + t)$ for ROVR.
Finally, we measured optimization performance by $f(\*x^*) - \max\{f(\*x_1), \dots, f(\*x_t), f(\hat{\*x}_t) \}$ for the benchmark functions and $\max\{f(\*x_1), \dots, f(\*x_t), f(\hat{\*x}_t) \}$ for the emulators, where $\hat{\*x}_t = \argmax_{\*x \in \cX} \mu_t(\*x)$, as in \citep{Takeno2020-Multifidelity,Takeno2022-generalized} since the noise is quite small and $f(\hat{\*x}_t)$ is often unstable due to hyperparameter estimation.

Figure~\ref{fig:exp_benchmark_emulator} shows the average and standard error of simple regret over $16$ random trials.
In most cases, OVR and ROVR yield similar results, indicating that the regularization in ROVR does not degrade practical performance.
For most results, OVR and ROVR show performance in the top group.
In particular, OVR and ROVR often outperform one-step lookahead methods, such as MES and JES.
Hence, particularly compared with one-step lookahead methods, we can observe the superiority of OVR and ROVR in both the benchmark functions and emulator experiments.

\begin{figure}[t]
    \centering
    \includegraphics[width=0.45\linewidth]{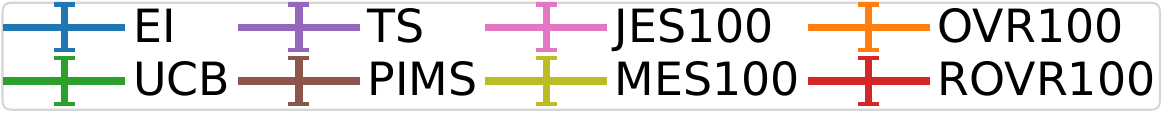}
    
    \includegraphics[width=0.021\linewidth]{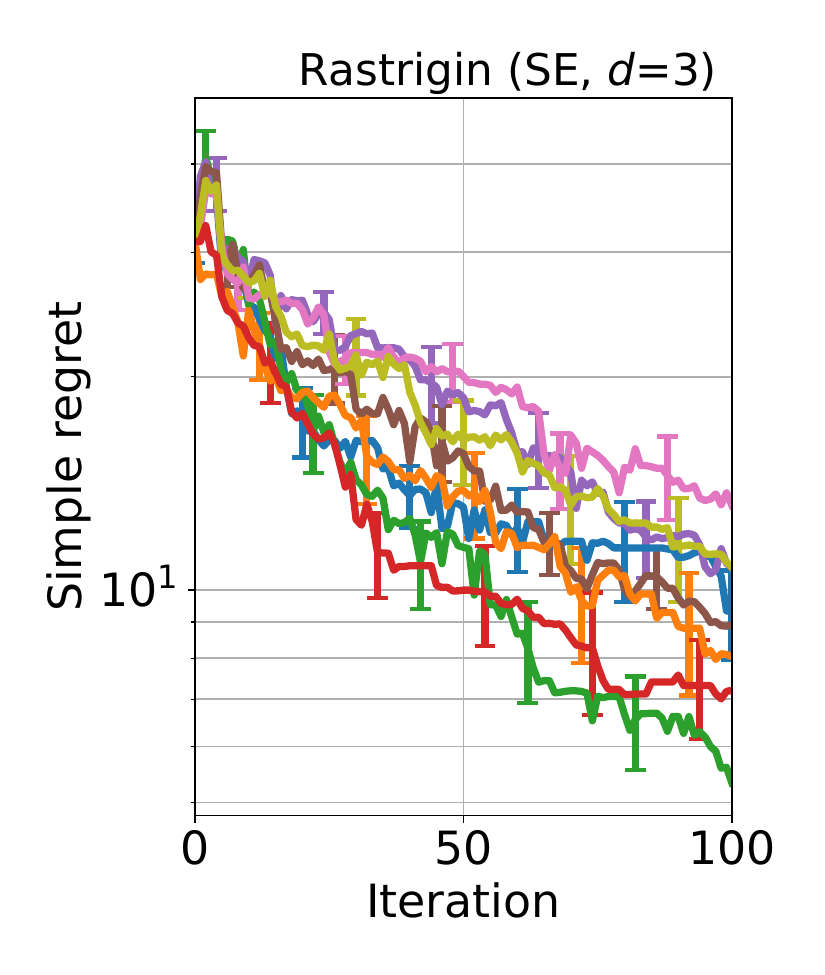}
    \includegraphics[width=0.237\linewidth]{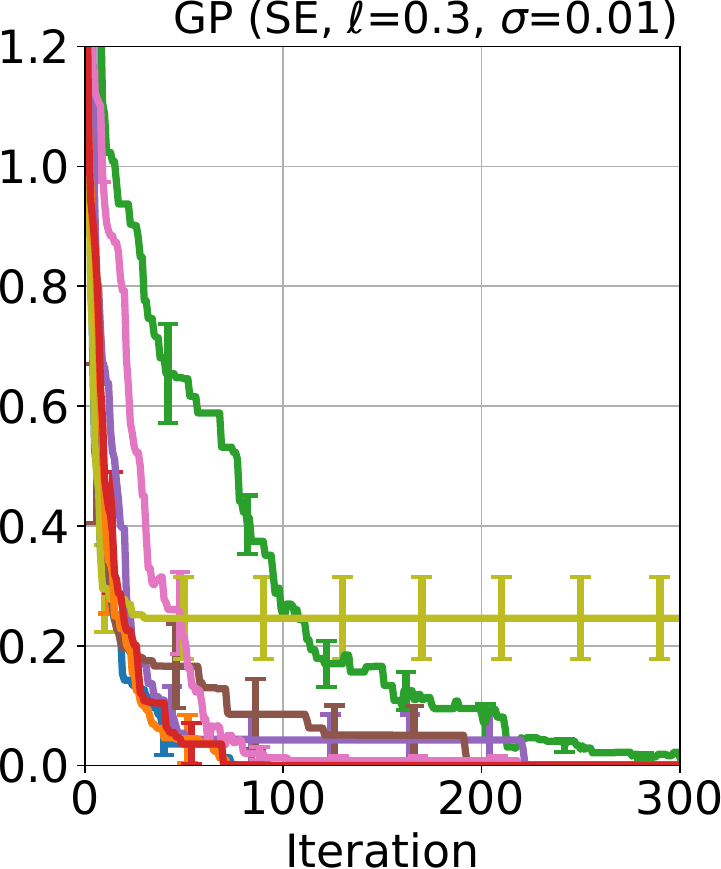}
    \includegraphics[width=0.237\linewidth]{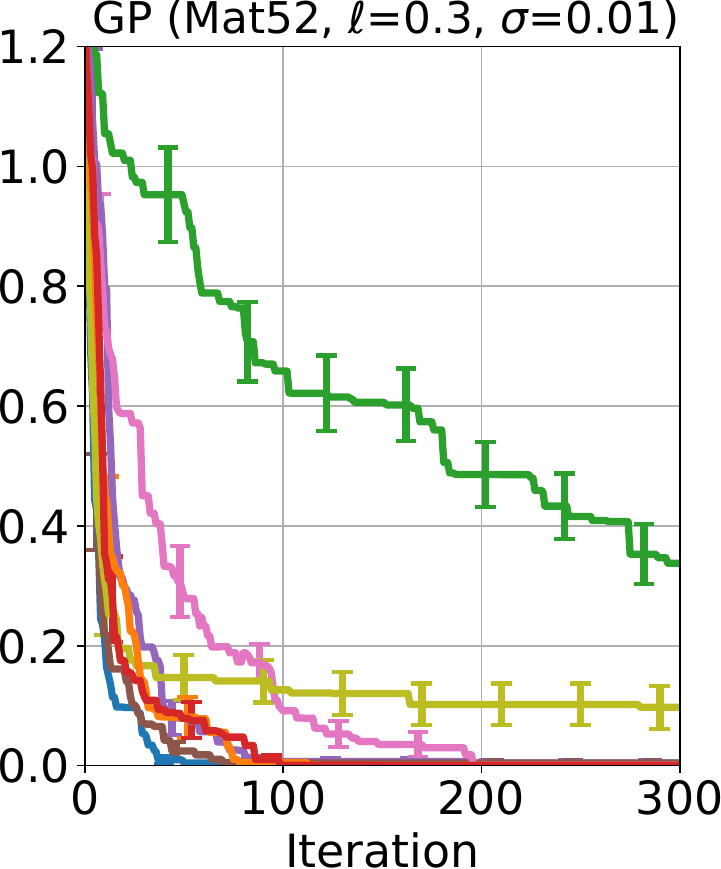}
    \includegraphics[width=0.237\linewidth]{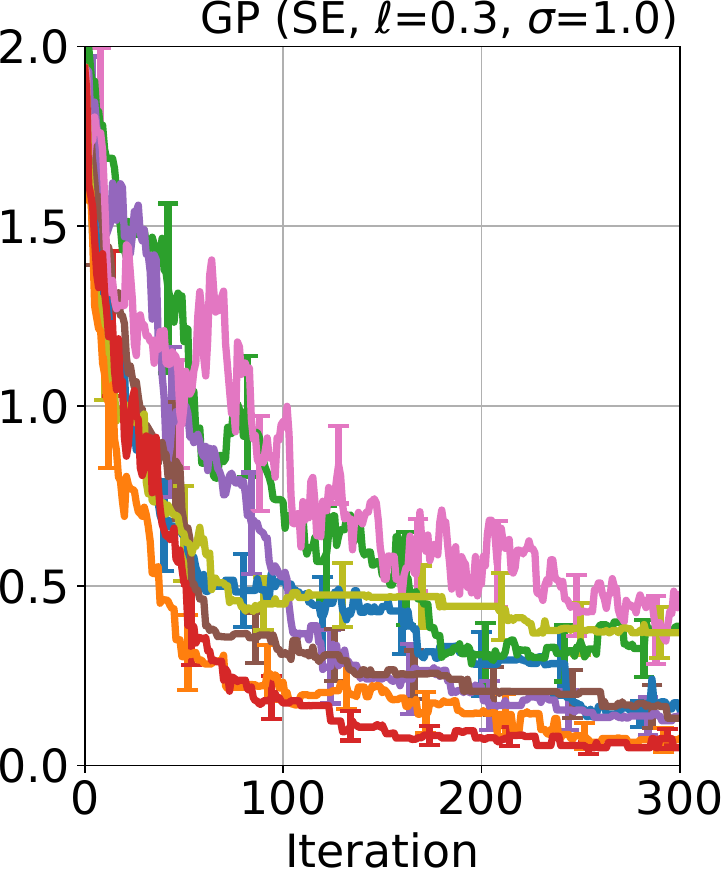}
    \includegraphics[width=0.237\linewidth]{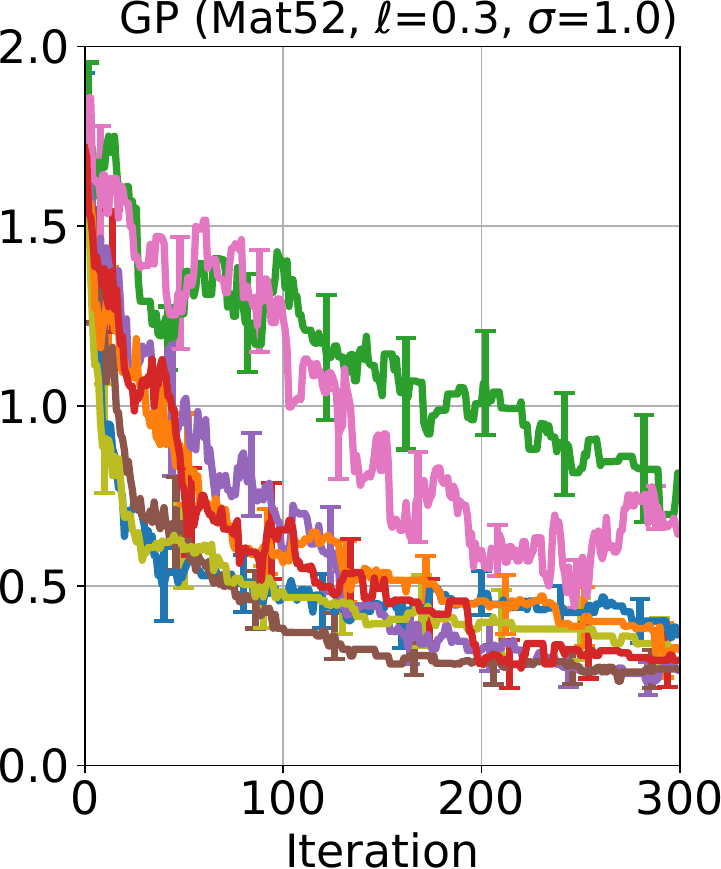}
    
    \caption{
    The average and standard error of simple regret on the synthetic function experiments.
    The suffix number in the legend implies the number of MC samples.
    The left two plots and the right two plots show the results for $\sigma^2=10^{-4}$ and $\sigma^2=1$, respectively.
    }
    \label{fig:exp_synthetic}
\end{figure}

\begin{figure}[t]
    \centering
    \includegraphics[width=0.45\linewidth]{fig/legend.pdf}
    
    \includegraphics[width=0.021\linewidth]{fig/vertical_axes_label.pdf}
    \includegraphics[width=0.237\linewidth]{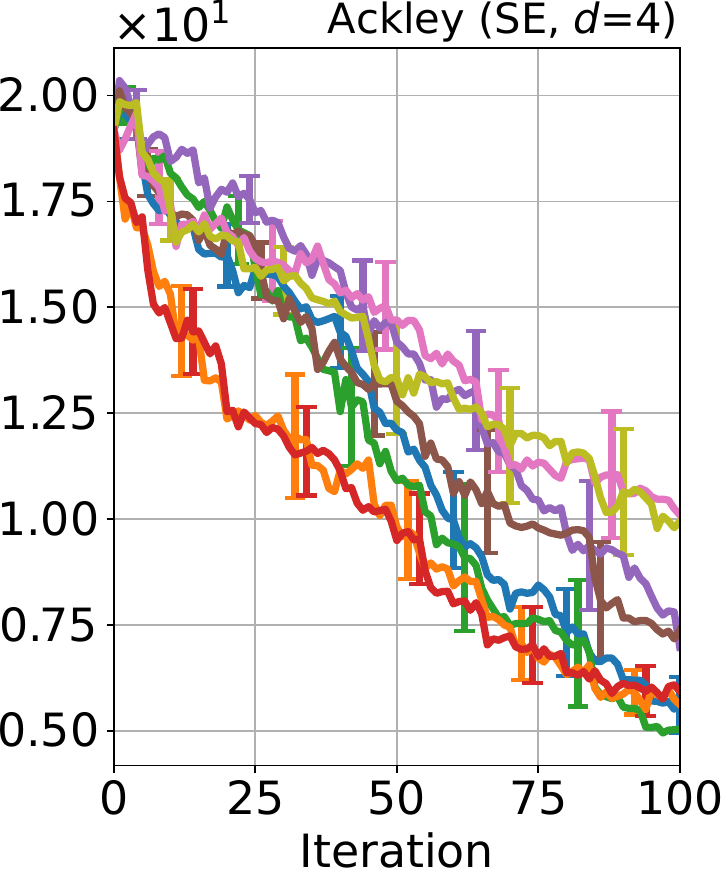}
    \includegraphics[width=0.237\linewidth]{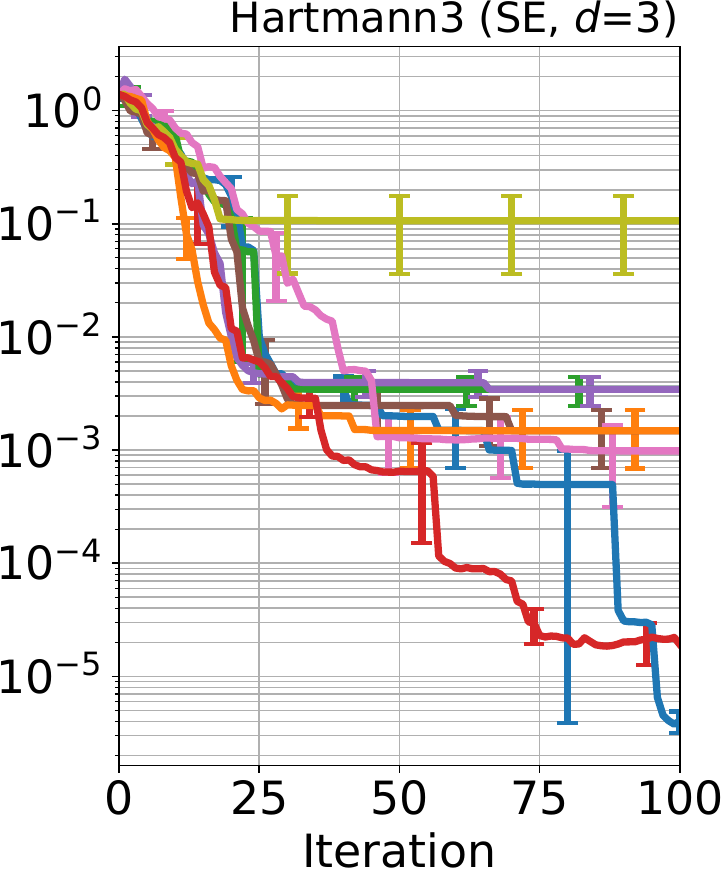}
    \includegraphics[width=0.237\linewidth]{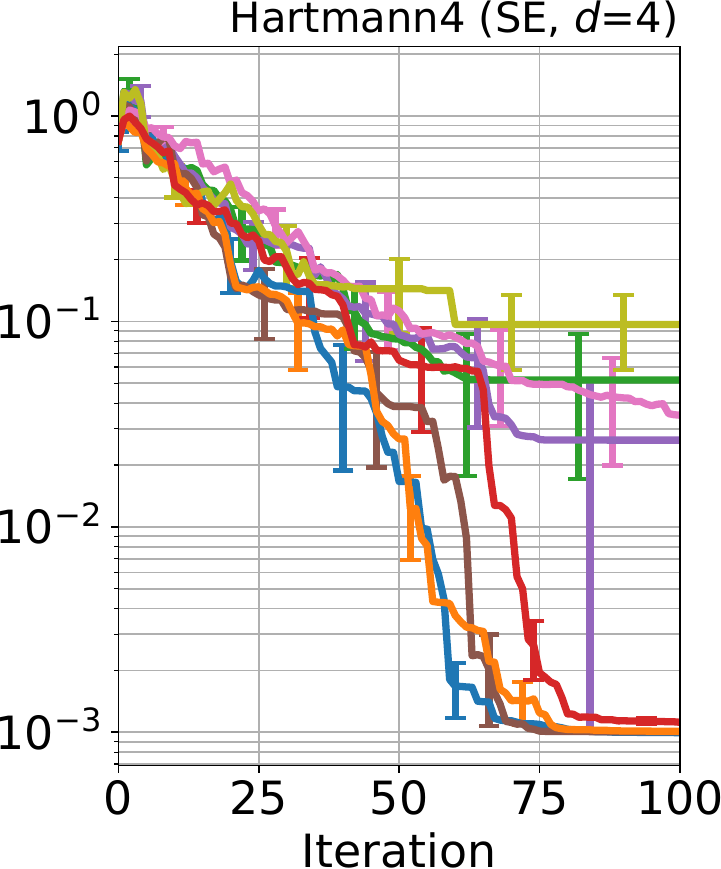}
    \includegraphics[width=0.237\linewidth]{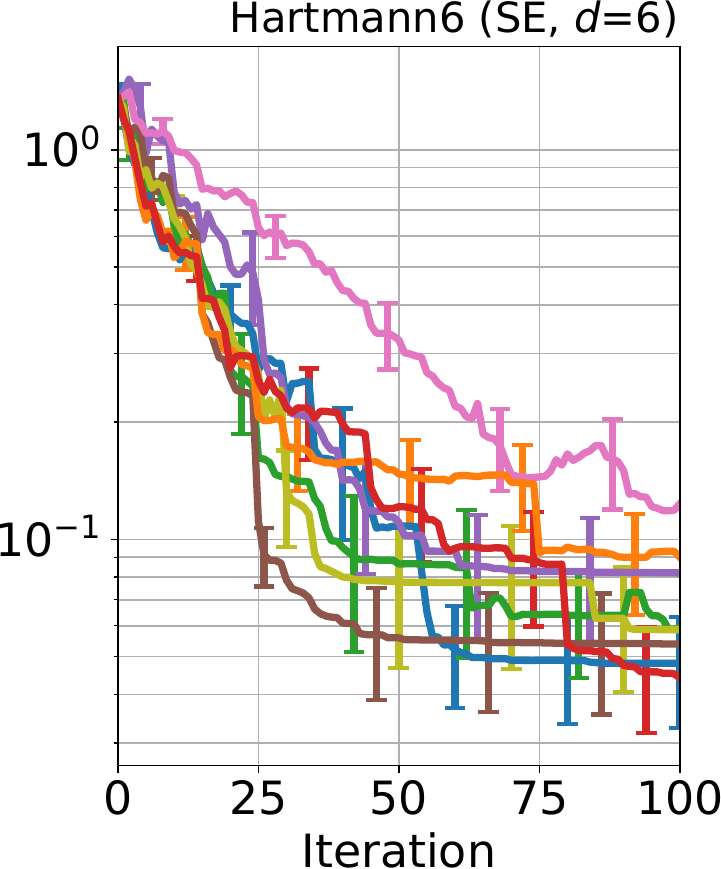}
    
    \includegraphics[width=0.021\linewidth]{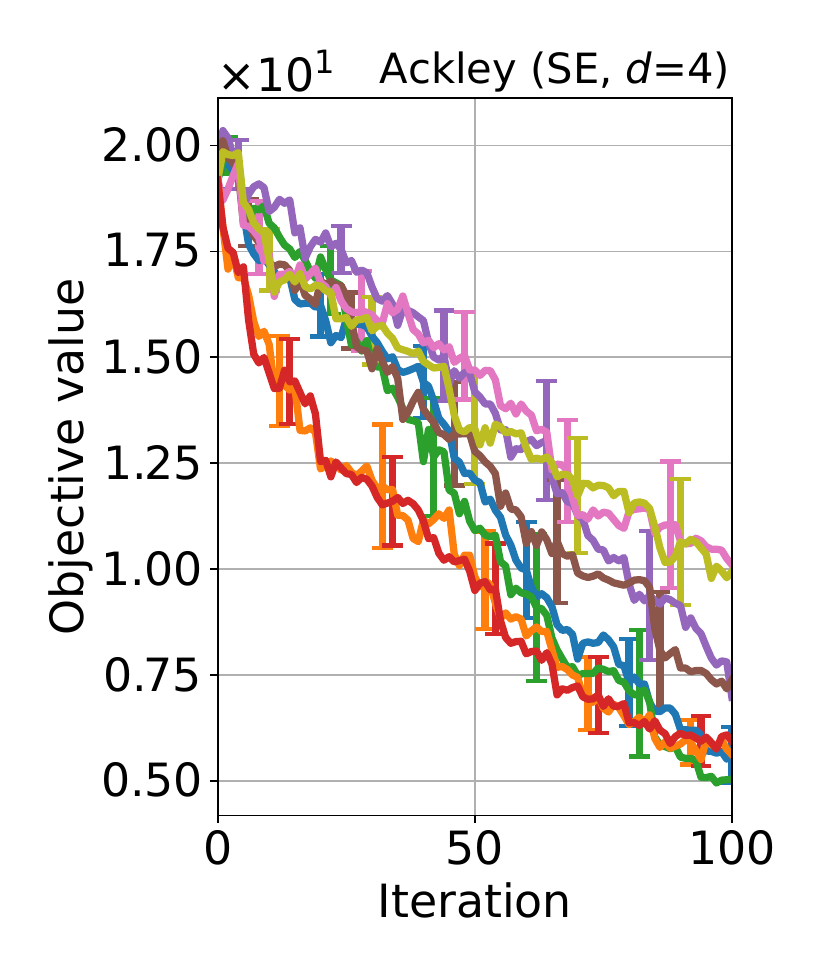}
    \includegraphics[width=0.237\linewidth]{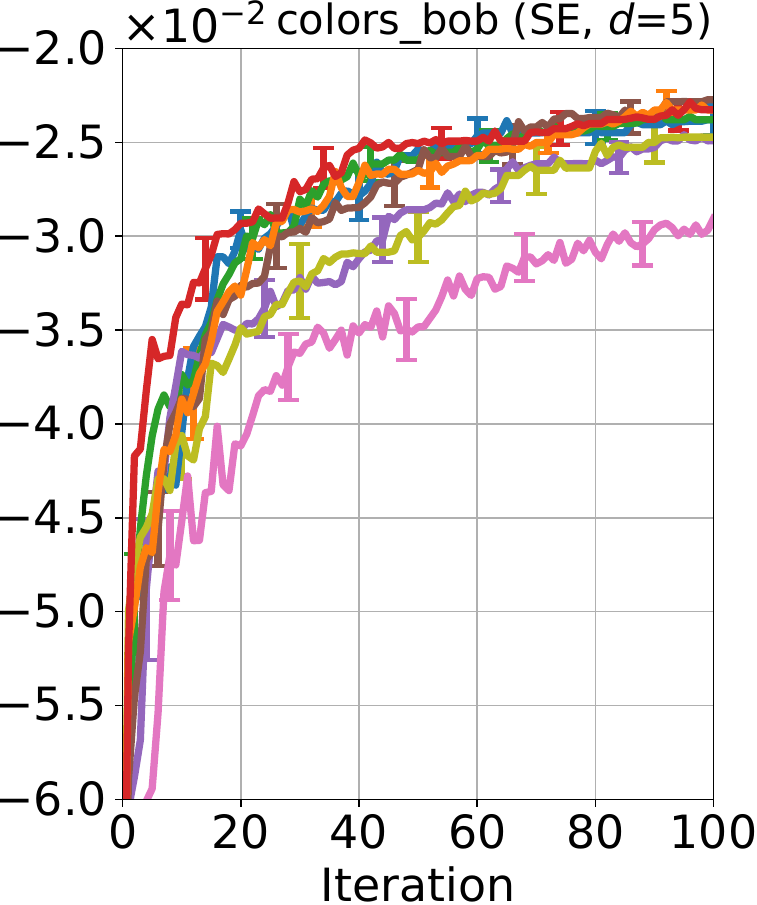}
    \includegraphics[width=0.237\linewidth]{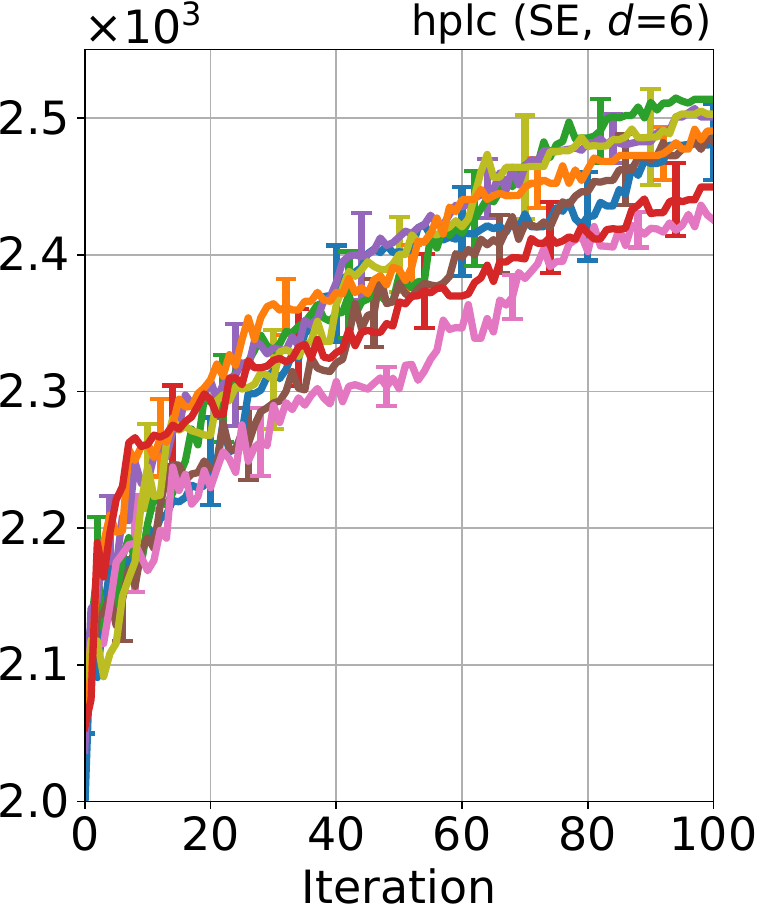}
    \includegraphics[width=0.237\linewidth]{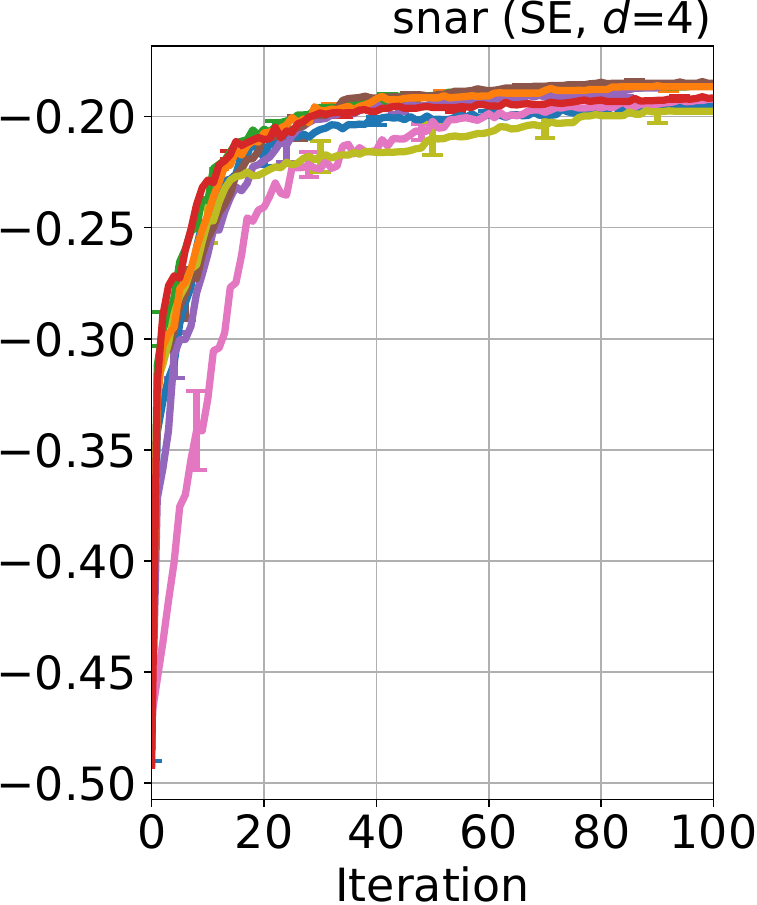}
    \includegraphics[width=0.237\linewidth]{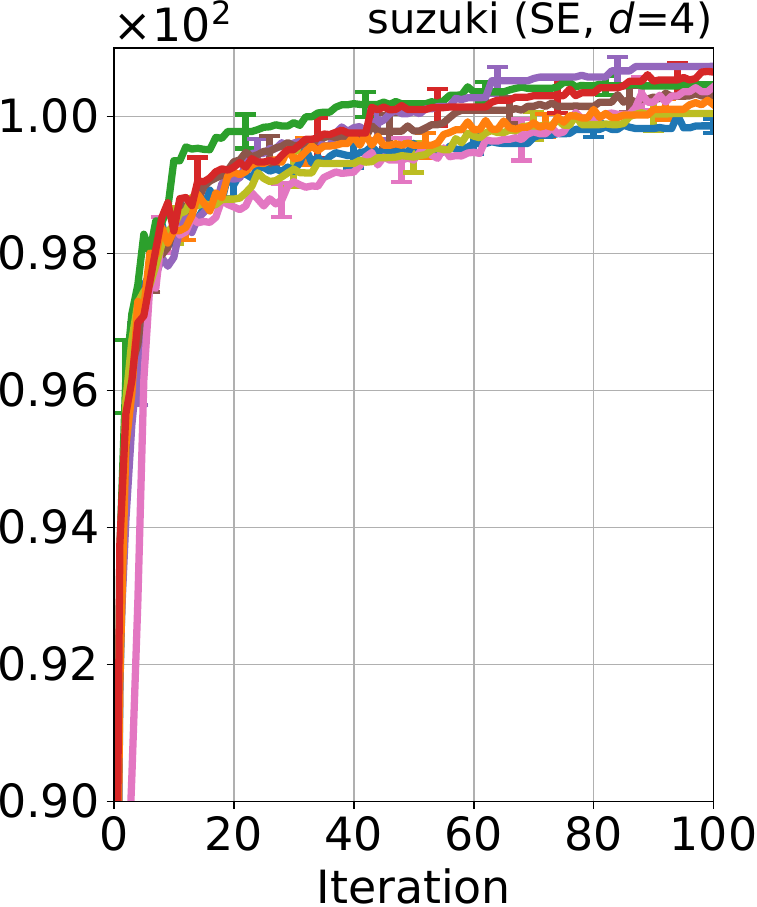}
    \caption{
    The average and standard error of experimental results for the benchmark function and emulators.
    The suffix number in the legend implies the number of MC samples.
    The top and bottom rows show the results on the benchmark functions and the emulators, respectively.
    }
    \label{fig:exp_benchmark_emulator}
\end{figure}

\section{Conclusion}
\label{sec:conclusion}

We proposed one-step lookahead BO methods called OVR and ROVR.
We show the uniform MC estimation error bound and that ROVR achieves the vanishing BSR upper bound.
Finally, we demonstrate the effectiveness of the proposed methods via the numerical experiments.

\paragraph{Limitations and future work.}
As discussed in Section~\ref{sec:analysis_regret}, the cumulative regret analysis remains an open problem.
Furthermore, determining whether we can tighten the BSR upper bound is important.
In addition, we left the analysis incorporating the approximation error for posterior sampling for future work.
On the other hand, we expect that OVR and ROVR are extendable to more complex optimization problems, as in other one-step lookahead BO methods, such as multi-fidelity BO \citep{Takeno2020-Multifidelity,Takeno2022-generalized,Moss2021-gibbon}, constrained BO \citep{Hernandez-Lobato2015-Predictive,takeno2022-sequential}, and multi-objective BO \citep{Poloczek2017-Multi,Suzuki2020-multi,inatsu2024-bounding}.
Extensions to these settings, maintaining the theoretical guarantee, seem a promising direction.

\section*{Acknowledgements}
This work was supported by JST PRESTO Grant Number JPMJPR24J6 and JSPS KAKENHI Grant Number JP24K20847.

\clearpage

\bibliography{ref}
\bibliographystyle{plainnat}
\appendix
\section{Auxiliary results}
\label{sec:lemmas}

We list auxiliary results here.

\begin{theorem}[Hoeffding's inequality, Theorem 2.2.6 in \citep{Vershynin2018-high}]
    \label{thm:hoeffding}
    Let $X_1, \dots, X_N$ be independent random variables.
    Assume that $X_i \in [m_i, M_i]$ for all $i \in [N]$.
    Then, for any $c > 0$, we have
    \begin{align*}
        \Pr \rbr{\frac{1}{N}\sum_{i = 1}^N \rbr{ X_i - \EE[X_i]} \geq c }
        \leq \exp \rbr{- \frac{2N^2 c^2}{\sum_{i = 1}^N (M_i - m_i)^2}}
    \end{align*}
\end{theorem}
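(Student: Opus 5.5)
The plan is the standard Chernoff (exponential moment) argument combined with Hoeffding's lemma. Write $S = \sum_{i=1}^N (X_i - \EE[X_i])$, so the event in question is $\{S \geq Nc\}$. For any $\lambda > 0$, Markov's inequality applied to the nonnegative variable $e^{\lambda S}$ gives
\begin{align*}
    \Pr\rbr{S \geq Nc} \leq e^{-\lambda N c}\, \EE\sbr{e^{\lambda S}} = e^{-\lambda N c} \prod_{i=1}^N \EE\sbr{e^{\lambda (X_i - \EE[X_i])}},
\end{align*}
where the factorization uses the independence of $X_1, \dots, X_N$. The whole problem then reduces to bounding each centered moment generating function.

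The key intermediate result is Hoeffding's lemma. It states that if $Z \in [m, M]$ almost surely, then for all $\lambda \in \RR$,
\begin{align*}
    \EE\sbr{e^{\lambda (Z - \EE Z)}} \leq \exp\rbr{\lambda^2 (M - m)^2 / 8}.
\end{align*}
I would prove it via the cumulant function $\psi(\lambda) = \log \EE[e^{\lambda (Z - \EE Z)}]$. This function satisfies $\psi(0) = 0$ and $\psi'(0) = 0$. Its second derivative $\psi''(\lambda)$ equals the variance of $Z$ under the exponentially tilted law $dQ_\lambda \propto e^{\lambda Z} dP$. Since $Q_\lambda$ is still supported on $[m, M]$, that variance is at most $\EE_{Q_\lambda}[(Z - (m+M)/2)^2] \leq (M-m)^2/4$. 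Taylor's theorem with remainder then yields $\psi(\lambda) \leq \lambda^2 (M-m)^2/8$.

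As a fallback, there is the classical convexity route. One bounds $e^{\lambda z}$ on $[m, M]$ by the chord through its endpoints and then shows that the resulting function $\phi(u) = -pu + \log(1 - p + p e^{u})$ satisfies $\phi(u) \leq u^2/8$, again by a second-derivative bound of the form $q(1-q) \leq 1/4$.

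Plugging the lemma in gives, with $D \coloneqq \sum_{i=1}^N (M_i - m_i)^2$,
\begin{align*}
    \Pr(S \geq Nc) \leq \exp\rbr{-\lambda N c + \lambda^2 D / 8}.
\end{align*}
I would minimize the exponent over $\lambda > 0$ by taking $\lambda = 4Nc / D$. This gives exactly $\exp(-2N^2c^2/D)$, the claimed bound.

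The degenerate case $D = 0$ needs a separate remark. Then every $X_i$ is almost surely equal to its mean, so $S = 0$ and the probability is $0$ for $c > 0$. This is consistent with reading the right-hand side as $0$.

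The main obstacle is Hoeffding's lemma itself, specifically the uniform bound $\psi'' \leq (M-m)^2/4$. This requires justifying differentiation under the expectation, which is legitimate because $Z$ is bounded, so all exponential moments are finite and smooth in $\lambda$. It also requires recognizing $\psi''$ as a tilted variance. The remaining steps (Markov, independence, quadratic optimization in $\lambda$) are routine.
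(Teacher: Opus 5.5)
Your proof is correct. It is the standard argument: the Chernoff bound with factorization by independence, then Hoeffding's lemma via the tilted-variance bound $\psi'' \leq (M-m)^2/4$, then the choice $\lambda = 4Nc/D$, which yields exactly $\exp(-2N^2c^2/D)$. The paper gives no proof of its own and simply cites Vershynin's Theorem 2.2.6, which is this bound with $t = Nc$. Your route is the usual proof of that result, so there is nothing further to compare.
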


\begin{restatable}[Proposition 1 in \citep{vakili2021-optimal}]{lemma}{lemVakiliProp1}
    \label{lemVakiliProp1}
    Let $\*z_{t-1}(\*x^*) = \*k_{t-1}(\*x^*) \rbr{\*K_{t-1} + \sigma^2 \*I_{t-1}}^{-1}$.
    Then, the following inequality holds:
    \begin{align*}
        \sigma_{t-1}^2(\*x) \geq \sigma^2 \| \*z_{t-1}(\*x^*) \|_2^2.
    \end{align*}
\end{restatable}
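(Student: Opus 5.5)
We prove this directly from the closed-form posterior variance in Eq.~\eqref{eq:GP}, reading the statement with the same point on both sides, $\*z_{t-1}(\*x) = \*k_{t-1}(\*x)^\top \rbr{\*K_{t-1} + \sigma^2 \*I_{t-1}}^{-1}$ (the vector $\*x^*$ appearing in the statement is just the evaluation point). The idea is to rewrite $\sigma_{t-1}^2(\*x) - \sigma^2 \|\*z_{t-1}(\*x)\|_2^2$ as a quadratic form in the PSD prior kernel matrix on the augmented point set $\{\*x, \*x_1, \dots, \*x_{t-1}\}$. That quadratic form is nonnegative, which gives the claim.

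\textbf{Step 1: notation.} Write $\*A = \*K_{t-1} + \sigma^2 \*I_{t-1}$, which is positive definite, and $\*k = \*k_{t-1}(\*x)$. Let $\*z = \*A^{-1}\*k$; this is the column version of $\*z_{t-1}(\*x)$, since $\*A$ is symmetric. Then $\sigma_{t-1}^2(\*x) = k(\*x,\*x) - \*k^\top \*A^{-1} \*k$.

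\textbf{Step 2: the key identity.} Because $\*A\*z = \*k$,
\begin{align*}
    \*z^\top \*K_{t-1} \*z + \sigma^2 \*z^\top \*z = \*z^\top \*A \*z = \*k^\top \*A^{-1} \*k = \*k^\top \*z .
\end{align*}
Hence $\sigma^2\|\*z\|_2^2 = \*k^\top \*z - \*z^\top \*K_{t-1}\*z$. Substituting this and $\*k^\top\*A^{-1}\*k = \*k^\top\*z$ into the difference gives
\begin{align*}
    \sigma_{t-1}^2(\*x) - \sigma^2 \|\*z\|_2^2
    = k(\*x,\*x) - 2\*k^\top \*z + \*z^\top \*K_{t-1} \*z .
\end{align*}

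\textbf{Step 3: nonnegativity.} The right-hand side equals $\*v^\top \tilde{\*K} \*v$, where $\*v = (1, -\*z^\top)^\top$ and $\tilde{\*K}$ is the kernel matrix on $\{\*x, \*x_1, \dots, \*x_{t-1}\}$. Equivalently, it is the prior variance $\mathrm{Var}\bigl(f(\*x) - \*z^\top \*f_{t-1}\bigr)$ under $f \sim \cG\cP(0,k)$, where $\*f_{t-1} = (f(\*x_1), \dots, f(\*x_{t-1}))^\top$. Since $k$ is positive semidefinite, this quantity is nonnegative, which proves the lemma.

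The only step that needs care is spotting the identity in Step~2. Once $\sigma^2\|\*z\|_2^2$ is expressed through $\*z^\top\*A\*z$, the remaining argument is immediate. No smoothness assumptions on $k$ are used, and the argument holds for any $\cD_{t-1}$.
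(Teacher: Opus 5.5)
Your proof is correct. Reading the statement with the same point on both sides is also the right call, for two reasons. The paper only ever applies the lemma at a single point: at $\*x^*$, followed by $\sigma_{t-1}(\cdot) \leq 1$, in the proof of Lemma~\ref{lemLipschitzContinuityAF}. And the literal mixed-point version is false in general. If $\*x$ is observed $n$ times and $\*x^*$ once, the two being uncorrelated with unit prior variance, then the left side is $\sigma^2/(n+\sigma^2)$ while the right side is $\sigma^2/(1+\sigma^2)^2$, which is larger for large $n$.

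The paper gives no proof of its own; it imports Proposition~1 of \citet{vakili2021-optimal}. That result rests on the same exact decomposition you derive, $\sigma_{t-1}^2(\*x) = \bigl\| k(\*x, \cdot) - \sum_{i} z_i k(\*x_i, \cdot) \bigr\|_{\mathcal{H}_k}^2 + \sigma^2 \|\*z\|_2^2$, phrased there through the RKHS unit ball and feature map. The RKHS-norm term is exactly your quadratic form $\*v^\top \tilde{\*K} \*v$, so the two arguments coincide up to language. Your version has the advantage of being self-contained and needing only positive semidefiniteness of $k$.
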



\begin{lemma}[Lemma H.1 of \citep{Takeno2023-randomized}, Lemma E.1 of \citep{takeno2025-regret}]
    \label{lem:L_f_bound}
    Let $f \sim \cG \cP (0, k)$ and Assumption~\ref{assump:Bayesian_continuous} holds.
    Then, the following inequality holds:
    \begin{align}
        \EE \sbr{\max_{j \in [d]} \sup_{\*x \in \cX} \left| \frac{\partial f (\*u)}{\partial \*u} \biggm|_{\*u = \*x} \right|} 
        \leq b \bigl( \sqrt{\log (ad)} + \sqrt{\pi} / 2 \bigr).
    \end{align}
\end{lemma}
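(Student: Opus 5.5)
The plan is to write the expectation of the nonnegative random variable
$Z \coloneqq \max_{j \in [d]} \sup_{\*x \in \cX} \bigl| \partial f(\*u)/\partial u_j |_{\*u=\*x} \bigr|$
as the integral of its tail, $\EE[Z] = \int_0^\infty \Pr(Z > L)\, dL$. I will split this integral at a threshold $L_0 \ge 0$ and control the two pieces separately. On $[0, L_0]$ I use the trivial bound $\Pr(Z > L) \le 1$. On $[L_0, \infty)$ I use the tail bound $\Pr(Z > L) \le a d \exp(-L^2/b^2)$ from Assumption~\ref{assump:Bayesian_continuous}. This gives
\begin{align*}
    \EE[Z] \le L_0 + a d \int_{L_0}^\infty \exp\rbr{-\frac{L^2}{b^2}} dL .
\end{align*}

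I choose $L_0 = b\sqrt{\log(ad)}$, so that $a d \exp(-L_0^2/b^2) = 1$. This choice requires $ad \ge 1$, which is implicit in the statement since $\sqrt{\log(ad)}$ must be real. (If $ad < 1$, one can take $L_0 = 0$ instead and obtain the even smaller bound $ad\, b\sqrt{\pi}/2$.)

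For the remaining Gaussian tail integral, I substitute $L = L_0 + u$ with $u \ge 0$. Since $L_0, u \ge 0$, we have $L^2 = L_0^2 + 2L_0 u + u^2 \ge L_0^2 + u^2$. Therefore
\begin{align*}
    a d \int_{L_0}^\infty \exp\rbr{-\frac{L^2}{b^2}} dL
    \le a d \exp\rbr{-\frac{L_0^2}{b^2}} \int_0^\infty \exp\rbr{-\frac{u^2}{b^2}} du
    = 1 \cdot \frac{b\sqrt{\pi}}{2},
\end{align*}
where the last equality is the half-Gaussian integral $\int_0^\infty e^{-u^2/b^2}\, du = b\sqrt{\pi}/2$. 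Adding the two pieces gives $\EE[Z] \le b\bigl(\sqrt{\log(ad)} + \sqrt{\pi}/2\bigr)$, which is the claim.

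There is no real obstacle. The only points needing care are the choice of $L_0$, which makes the prefactor $ad$ cancel exactly, and the elementary inequality $(L_0+u)^2 \ge L_0^2+u^2$, which avoids any Mills-ratio estimate. Measurability of the supremum is harmless here, because the sample paths are continuously differentiable under Assumption~\ref{assump:Bayesian_continuous} and so the supremum can be taken over a countable dense subset of $\cX$.
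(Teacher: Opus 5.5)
Your proof is correct and follows the standard argument behind the cited result; the paper itself gives no proof and only cites Lemma H.1 of \citet{Takeno2023-randomized}. That argument writes $\EE[Z]$ as a tail integral, bounds the tail by $1$ below $b\sqrt{\log(ad)}$ and by $ad\exp(-L^2/b^2)$ above it, and bounds the remaining Gaussian tail by $b\sqrt{\pi}/2$; your inequality $(L_0+u)^2 \geq L_0^2+u^2$ is just the elementary form of the bound $\mathrm{erfc}(x) \leq e^{-x^2}$ used in that last step.
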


\begin{lemma}[Excercise 4.2.10 in \citep{Vershynin2018-high}]
    \label{lem:CN_monotonicity}
    Let $\cN(\cX, d, \epsilon)$ with some metric $d$ and $\epsilon > 0$ be the covering number of a domain $\cX$.
    Then, if $\widetilde{\cX} \subset \cX$, the following inequality holds:
    \begin{align*}
        \cN(\widetilde{\cX}, d, \epsilon) \leq \cN(\cX, d, \epsilon / 2).
    \end{align*}
\end{lemma}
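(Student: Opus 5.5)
The plan is the standard ``re-centering'' argument, which works for any metric $d$. The factor $1/2$ is needed only because the covering number $\cN(\widetilde{\cX}, d, \epsilon)$ requires the net points to lie in $\widetilde{\cX}$ itself. A net for $\cX$ may have its centers outside $\widetilde{\cX}$, so it cannot be reused directly; instead we move each center into $\widetilde{\cX}$ and pay for this with the triangle inequality.

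First, let $N = \cN(\cX, d, \epsilon/2)$. If $N = \infty$ there is nothing to prove, so assume $N < \infty$. Take an $\epsilon/2$-net $\{\*z_1, \dots, \*z_N\} \subset \cX$ of $\cX$, so that every $\*x \in \cX$ satisfies $d(\*x, \*z_i) \le \epsilon/2$ for some $i$. Let $I \subset [N]$ be the set of indices $i$ whose closed ball $B(\*z_i, \epsilon/2)$ meets $\widetilde{\cX}$. For each $i \in I$, choose a point $\*y_i \in \widetilde{\cX} \cap B(\*z_i, \epsilon/2)$. The resulting set $\{\*y_i\}_{i \in I} \subset \widetilde{\cX}$ has at most $N$ elements.

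Next, we check that $\{\*y_i\}_{i \in I}$ is an $\epsilon$-net of $\widetilde{\cX}$. Fix any $\*y \in \widetilde{\cX}$. Since $\*y \in \cX$, there is an $i$ with $d(\*y, \*z_i) \le \epsilon/2$. This ball contains $\*y \in \widetilde{\cX}$, so it meets $\widetilde{\cX}$ and hence $i \in I$. The triangle inequality then gives
\begin{align*}
    d(\*y, \*y_i) \le d(\*y, \*z_i) + d(\*z_i, \*y_i) \le \frac{\epsilon}{2} + \frac{\epsilon}{2} = \epsilon.
\end{align*}
Therefore $\cN(\widetilde{\cX}, d, \epsilon) \le |I| \le N = \cN(\cX, d, \epsilon/2)$, which is the claim.

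I do not expect a real obstacle here. The only point needing care is the convention for covering numbers: whether balls are open or closed, and whether centers must lie in the covered set. If closed balls are used and centers must belong to the set, the argument above goes through verbatim. If open balls are used, the same argument gives $d(\*y, \*y_i) < \epsilon$, which is still an $\epsilon$-net. If centers are allowed anywhere, the inequality is immediate even without halving $\epsilon$.
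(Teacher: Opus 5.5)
Your re-centering argument is correct, and it is the standard solution to Exercise 4.2.10 in Vershynin, which the paper simply cites without giving its own proof. Your convention (closed balls, centers required to lie in the covered set) is the one the paper relies on: its net $\cX_t$ obtained from the grid on $[0,r]^d$ must lie inside $\cX$, and the halving is exactly where the factor $2$ in $|\cX_t| \le (2dr\sqrt{M_t}L_{\alpha_t})^d$ comes from.
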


\section{Proof for Section~\ref{sec:analysis_MC}}
\label{sec:proof_MC}

\subsection{Proof for finite domains}
\label{sec:proof_MC_discrete}

\thmMCErrorDiscrete*
\begin{proof}
    From Theorem~\ref{thm:hoeffding}, it is trivial to obtain
    \begin{align*}
        \forall M \in \NN, \forall \*x \in \cX, 
        \Pr\rbr{|\alpha_t (\*x) - \hat{\alpha}_t (\*x)| \leq \sqrt{\frac{\log(1 / \delta)}{2M}} \biggm| \cD_{t-1}} 
        \geq 1 - \delta,
    \end{align*}
    for any $\cD_{t-1}$ and $\delta \in (0, 1)$, since $\sigma_{t-1}(\*x^* \mid \*x) \in [0, 1]$ from Assumption~\ref{assump:Bayesian} and the monotone decreasing property of the posterior variance.
    Thus, taking the union bounds with $\delta / |\cX|$, we can obtain the desired result.
\end{proof}

\subsection{Proof for continuous domains}
\label{sec:proof_MC_continuous}

First, we show the following supporting lemma.
From Lemma~\ref{lem:Lipschitz_posterior_std} and Lemma~\ref{lemVakiliProp1} (Proposition 1 in \citep{vakili2021-optimal}), the following holds:
\begin{restatable}[Lipschitz continuity of AF]{lemma}{lemLipschitzContinuityAF} \label{lemLipschitzContinuityAF}
    Suppose that Assumption~\ref{assump:MCEstimation_continuous} and the same premise as in Lemma~\ref{lem:Lipschitz_posterior_std} hold.
    Define $E_t(\*x) = \alpha_t(\*x) - \hat{\alpha}_t(\*x)$.
    Then, for any $t \in \NN$, $\cD_{t-1}$, $c_t \geq 0$, and $\{ \*x^*_{t, m} \}_{m=1}^{M_t}$, the error function $E_t$ is Lipschitz continuous as follows:
    \begin{align*}
        \forall \*x, \*x^\prime \in \cX, 
        |E_t(\*x) - E_t(\*x^\prime)| 
        \leq \rbr{\frac{L_k}{\sigma} \rbr{1 + \frac{\sqrt{t}}{\sigma}} + \frac{L_{\sigma}}{\sigma^2}} \| \*x - \*x^\prime \|_1,
    \end{align*}
    where $L_k$ and $L_{\sigma}$ are defined as in Assumption~\ref{assump:MCEstimation_continuous} and Lemma~\ref{lem:Lipschitz_posterior_std}, respectively.
\end{restatable}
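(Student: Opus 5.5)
The plan is to reduce everything to a Lipschitz bound, uniform in the ``target'' point $\*z$, for the map $\*x \mapsto \sigma_t(\*z \mid \*x)$. The regularization term $-c_t\sigma_{t-1}(\*x)$ enters $\alpha_t$ and $\hat{\alpha}_t$ identically, so it cancels in $E_t$. Hence
\begin{align*}
    E_t(\*x) = \EE_{\*x^*}\sbr{\sigma_t(\*x^* \mid \*x) \mid \cD_{t-1}} - \frac{1}{M_t}\sum_{m=1}^{M_t} \sigma_t(\*x^*_{t,m} \mid \*x).
\end{align*}
Both pieces are averages, over some probability measure on $\*z$, of $h_{\*x}(\*z) \coloneqq \sigma_t(\*z \mid \*x)$. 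Therefore $|E_t(\*x) - E_t(\*x^\prime)|$ is controlled by $\sup_{\*z \in \cX} |h_{\*x}(\*z) - h_{\*x^\prime}(\*z)|$, once from the expectation and once from the empirical mean. I will keep track of whether this costs a factor of two, or whether it can be absorbed by a slightly sharper per-term constant.

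For fixed $\*z$, write $s = \sigma_{t-1}(\*z)$, $D(\*x) = \sqrt{\sigma_{t-1}^2(\*x) + \sigma^2}$ and $a(\*x) = k_{t-1}(\*x, \*z)/D(\*x)$. Then $h_{\*x}(\*z) = \sqrt{s^2 - a(\*x)^2}$. The square root is the dangerous part, since it is not Lipschitz near zero. I will handle it by showing $a^2$ stays bounded away from $s^2$. By Cauchy--Schwarz for the posterior kernel, $|k_{t-1}(\*x,\*z)| \leq \sigma_{t-1}(\*x)\, s$, so
\begin{align*}
    s^2 - a^2 \geq s^2 \frac{\sigma^2}{\sigma_{t-1}^2(\*x) + \sigma^2} \geq \frac{\sigma^2 a^2}{\sigma_{t-1}^2(\*x)} \geq \sigma^2 a^2.
\end{align*}
This uses $\sigma_{t-1}^2 \leq 1$ from Assumption~\ref{assump:Bayesian}. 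Consequently the derivative of $a \mapsto \sqrt{s^2 - a^2}$ along the segment, namely $|a|/\sqrt{s^2-a^2}$, is at most $1/\sigma$. The segment-wise argument works because the same bound holds at every intermediate value of $a$. This gives $|h_{\*x}(\*z) - h_{\*x^\prime}(\*z)| \leq \sigma^{-1}|a(\*x) - a(\*x^\prime)|$.

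It remains to bound $|a(\*x) - a(\*x^\prime)|$. I will split it as
\begin{align*}
    \frac{|k_{t-1}(\*x,\*z) - k_{t-1}(\*x^\prime,\*z)|}{D(\*x)} + |k_{t-1}(\*x^\prime,\*z)|\,\biggl|\frac{1}{D(\*x)} - \frac{1}{D(\*x^\prime)}\biggr|.
\end{align*}
For the first numerator, expand via \eqref{eq:GP}. It is at most $|k(\*x,\*z) - k(\*x^\prime,\*z)| + \|\*k_{t-1}(\*x) - \*k_{t-1}(\*x^\prime)\|_2 \, \|\*z_{t-1}(\*z)\|_2$. Assumption~\ref{assump:MCEstimation_continuous} bounds the first summand by $L_k\|\*x - \*x^\prime\|_1$, and each of the $t-1$ entries of the vector difference in the same way, giving $\sqrt{t}\,L_k\|\*x-\*x^\prime\|_1$. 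Lemma~\ref{lemVakiliProp1} gives $\|\*z_{t-1}(\*z)\|_2 \leq \sigma_{t-1}(\*z)/\sigma \leq 1/\sigma$. Together with $D \geq \sigma$, the first piece is $O\bigl(\tfrac{L_k}{\sigma}(1 + \tfrac{\sqrt{t}}{\sigma})\bigr)\|\*x-\*x^\prime\|_1$.

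For the second piece, use $|k_{t-1}| \leq 1$ and $|1/D - 1/D^\prime| \leq |D - D^\prime|/\sigma^2$. Then $|D(\*x) - D(\*x^\prime)| \leq |\sigma_{t-1}(\*x) - \sigma_{t-1}(\*x^\prime)| \leq L_\sigma \|\*x-\*x^\prime\|_1$, since $u \mapsto \sqrt{u^2+\sigma^2}$ is 1-Lipschitz, and then Lemma~\ref{lem:Lipschitz_posterior_std} applies. This produces the $L_\sigma/\sigma^2$ term.

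The main obstacle I expect is the square-root step, that is, showing the argument of the square root is bounded below relative to $a^2$ so the composition stays Lipschitz; the inequality above is my intended route. The second concern is purely bookkeeping. The factor $1/\sigma$ from the square root, and the factor from combining the expectation and the empirical mean, must be reconciled with the exact constant $\frac{L_k}{\sigma}(1+\frac{\sqrt t}{\sigma}) + \frac{L_\sigma}{\sigma^2}$. If they do not match exactly, I would first try to tighten the square-root estimate: the bound $s^2 - a^2 \geq s^2\sigma^2/D^2$ allows dividing through by $s$ instead of $\sigma$. Failing that, I would state the lemma with the resulting constant, which only changes the logarithmic factor in Theorem~\ref{thm:MCError_continuous}.
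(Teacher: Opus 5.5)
Your skeleton is the paper's: reduce to a bound on $\*x\mapsto\sigma_t(\*z\mid\*x)$ that is uniform in $\*z$, strip the square root to reach $a(\*x)=k_{t-1}(\*x,\*z)/\sqrt{\sigma_{t-1}^2(\*x)+\sigma^2}$, and split $|a(\*x)-a(\*x^\prime)|$ into two terms. The kernel-difference term uses Assumption~\ref{assump:MCEstimation_continuous}, Cauchy--Schwarz and Lemma~\ref{lemVakiliProp1}. The denominator term uses $1$-Lipschitzness of $u\mapsto\sqrt{u^2+\sigma^2}$ and Lemma~\ref{lem:Lipschitz_posterior_std}. That second half is exactly the paper's treatment of $A_1$ and $A_2$. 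Your steps are valid, but they prove the lemma with constant $\frac{2}{\sigma}\rbr{\frac{L_k}{\sigma}\rbr{1+\frac{\sqrt t}{\sigma}}+\frac{L_\sigma}{\sigma^2}}$. As a proof of the statement as written, it therefore falls short by the factor $2/\sigma$ you anticipated.

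That factor is not slack you can recover by imitating the paper. The paper reaches the sharper constant only through two unjustified steps, located exactly where your two factors arise.

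\emph{The factor $2$.} The paper's opening ``it suffices to show'' silently drops it. The difference $\sigma_t(\*z\mid\*x)-\sigma_t(\*z\mid\*x^\prime)$ can take both signs as $\*z$ varies (e.g.\ at $t=1$ with a stationary kernel, compare $\*z=\*x$ with $\*z=\*x^\prime$). So a per-$\*z$ bound alone only controls the expectation minus the empirical mean by twice that bound.

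\emph{The factor $1/\sigma$.} The paper's square-root step proves $\sqrt{a-c}-\sqrt{a-b}\le\sqrt{b-c}$ but then writes $\sqrt b-\sqrt c$, which is \emph{smaller} than $\sqrt{b-c}$, so the inequality points the wrong way. In effect it claims $u\mapsto\sqrt{s^2-u^2}$ is $1$-Lipschitz in $|u|$. This fails even on the admissible range whenever $\sigma<1$: at $t=1$ with $k(\*x,\*x)=1$, taking $\*z=\*x^\prime$ and $\*x\to\*x^\prime$, the paper's intermediate inequality is off by a factor approaching $1/\sigma$. Your Cauchy--Schwarz restriction $|a|\le s/\sqrt{1+\sigma^2}$ supplies the ingredient the paper lacks. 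Convexity of that interval handles the intermediate values, which you asserted without saying why.

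Your proposed tightening will not remove the $1/\sigma$. On $|u|\le s/\sqrt{1+\sigma^2}$ one has $\sup|u|/\sqrt{s^2-u^2}=1/\sigma$ exactly. Dividing by $s$ instead gives $|a|\sqrt{\sigma_{t-1}^2(\*x)+\sigma^2}/(s\sigma)\le\sigma_{t-1}(\*x)/\sigma$, which has the same worst case. Beating it would require showing that $\nabla_{\*x}k_{t-1}(\*x,\*z)$ is small when $\*z$ is nearly perfectly correlated with $\*x$, and $L_k$ and $L_\sigma$ cannot give that. So neither your argument nor the paper's establishes the stated constant, and your fallback is the right resolution: state the lemma with the extra factor $2/\sigma$. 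In Theorem~\ref{thm:MCError_continuous} this only adds $d\log(2/\sigma)$ to the numerator under the square root and leaves the rate in $M_t$ unchanged.
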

\begin{proof}
    It suffices to show that, for any $t \in \NN$, $\cD_{t-1}$, and $\*x^* \in \cX$, 
    \begin{align*}
        \forall \*x, \*x^\prime \in \cX, 
        |\sigma_t(\*x^* \mid \*x) - \sigma_t(\*x^* \mid \*x^\prime)| 
        \leq \frac{L_k}{\sigma} \rbr{1 + \frac{\sqrt{t}}{\sigma}} \|\*x - \*x^\prime \|_1.
    \end{align*}
    Assume $\sigma_t^2(\*x^* \mid \*x) \geq \sigma_t^2(\*x^* \mid \*x^\prime)$.
    Then, from the definition of $\sigma_t(\*x^* \mid \*x)$, we have
    \begin{align}
        |\sigma_t(\*x^* \mid \*x) - \sigma_t(\*x^* \mid \*x^\prime)|
        &= \sigma_t(\*x^* \mid \*x) - \sigma_t(\*x^* \mid \*x^\prime) \\
        &= 
        \sqrt{\sigma_{t-1}^2(\*x^*) - \frac{k_{t-1}^2(\*x, \*x^*)}{\sigma_{t-1}^2(\*x) + \sigma^2} } 
        - \sqrt{\sigma_{t-1}^2(\*x^*) - \frac{k_{t-1}^2(\*x^\prime, \*x^*)}{\sigma_{t-1}^2(\*x^\prime) + \sigma^2} } \\
        &\leq 
        \frac{|k_{t-1}(\*x^\prime, \*x^*)|}{\sqrt{\sigma_{t-1}^2(\*x^\prime) + \sigma^2}}
        -
        \frac{|k_{t-1}(\*x, \*x^*)|}{\sqrt{\sigma_{t-1}^2(\*x) + \sigma^2}},
    \end{align}
    where we use the fact that, for any $0 \leq c \leq b \leq a \leq 1$, 
    \begin{align}
        \sqrt{a - c} - \sqrt{a - b}
        = \frac{b - c}{\sqrt{a - c} + \sqrt{a - b}}
        \leq \frac{b - c}{\sqrt{b - c}}
        = \sqrt{b - c}.
    \end{align}
    We can obtain the similar result for the case of $\sigma_t^2(\*x^* \mid \*x) < \sigma_t^2(\*x^* \mid \*x^\prime)$.
    Hence, we obtain
    \begin{align}
        |\sigma_t(\*x^* \mid \*x) - \sigma_t(\*x^* \mid \*x^\prime)|
        &\leq 
        \left| \frac{|k_{t-1}(\*x, \*x^*)|}{\sqrt{\sigma_{t-1}^2(\*x) + \sigma^2}} 
        - \frac{|k_{t-1}(\*x^\prime, \*x^*)|}{\sqrt{\sigma_{t-1}^2(\*x^\prime) + \sigma^2}} \right|.
    \end{align}
    By the triangle inequality, we further obtain
    \begin{align}
        |\sigma_t(\*x^* \mid \*x) - \sigma_t(\*x^* \mid \*x^\prime)|
        &\leq 
        \left| \frac{k_{t-1}(\*x, \*x^*)}{\sqrt{\sigma_{t-1}^2(\*x) + \sigma^2}} 
        - \frac{k_{t-1}(\*x^\prime, \*x^*)}{\sqrt{\sigma_{t-1}^2(\*x^\prime) + \sigma^2}} \right|.
    \end{align}
    Moreover, we can arrange the upper bound as follows:
    \begin{align}
        &|\sigma_t(\*x^* \mid \*x) - \sigma_t(\*x^* \mid \*x^\prime)| \\
        &\leq 
        \frac{|k_{t-1}(\*x, \*x^*)- k_{t-1}(\*x^\prime, \*x^*)|}{\sqrt{\sigma_{t-1}^2(\*x) + \sigma^2}} 
        + \frac{|k_{t-1}(\*x^\prime, \*x^*)| \rbr{\sqrt{\sigma_{t-1}^2(\*x) + \sigma^2} - \sqrt{\sigma_{t-1}^2(\*x^\prime) + \sigma^2}}}{\sqrt{\sigma_{t-1}^2(\*x) + \sigma^2} \sqrt{\sigma_{t-1}^2(\*x^\prime) + \sigma^2}} \\
        &\leq 
        \frac{1}{\sigma} \underbrace{|k_{t-1}(\*x, \*x^*)- k_{t-1}(\*x^\prime, \*x^*)|}_{A_1}
        + \frac{1}{\sigma^2} \underbrace{\rbr{ \sqrt{\sigma_{t-1}^2(\*x) + \sigma^2} - \sqrt{\sigma_{t-1}^2(\*x^\prime) + \sigma^2}}}_{A_2}. \label{eq:lips_upper_bound_first}
    \end{align}
    Therefore, it suffices to show the Lipschitz constants of $A_1$ and $A_2$.

    For $A_1$, from the definition, we have
    \begin{align}
        A_1 
        &\leq \left| \rbr{k(\*x, \*x^*) - k(\*x^\prime, \*x^*)} + \rbr{\*k_{t-1}(\*x^*)^\top \rbr{\*K_{t-1} + \sigma^2 \*I_{t-1}}^{-1} \rbr{\*k_{t-1}(\*x) - \*k_{t-1}(\*x^\prime)} } \right|.
    \end{align}
    Defining $\*z_{t-1}(\*x^*) = \*k_{t-1}(\*x^*) \rbr{\*K_{t-1} + \sigma^2 \*I_{t-1}}^{-1}$, from Assumption~\ref{assump:MCEstimation_continuous} and Chaucy--Schwartz inequality, we have
    \begin{align}
        A_1
        &\leq L_k \|\*x - \*x^\prime \|_1 + \sqrt{ \| \*z_{t-1}(\*x^*) \|_2^2 \| \*k_{t-1}(\*x) - \*k_{t-1}(\*x^\prime) \|_2^2} \\
        &\leq L_k \|\*x - \*x^\prime \|_1 + \frac{1}{\sigma} \| \*k_{t-1}(\*x) - \*k_{t-1}(\*x^\prime) \|_2 \\
        &\leq L_k \rbr{1 + \frac{\sqrt{t}}{\sigma}} \|\*x - \*x^\prime \|_1, \label{eq:A1_bound}
    \end{align}
    where we use Lemma~\ref{lemVakiliProp1} and $\sigma_{t-1}(\*x) \leq 1$ for all $t \in \NN$ and $\*x \in \cX$ for the second inequality.

    For $A_2$, we can see that
    \begin{align}
        A_2
        &\leq 
        \sigma_{t-1}(\*x) - \sigma_{t-1}(\*x^\prime). \label{eq:A2_bound}
    \end{align}
    Hence, from Lemma~\ref{lem:Lipschitz_posterior_std}, we have
    \begin{align}
        A_2 &\leq L_{\sigma} \| \*x - \*x^\prime \|_1.
    \end{align}

    Consequently, by combining Eqs.~\eqref{eq:lips_upper_bound_first}, \eqref{eq:A1_bound}, and \eqref{eq:A2_bound}, we obtain
    \begin{align}
        |\sigma_t(\*x^* \mid \*x) - \sigma_t(\*x^* \mid \*x^\prime)|
        &\leq 
        \rbr{\frac{L_k}{\sigma} \rbr{1 + \frac{\sqrt{t}}{\sigma}} + \frac{L_{\sigma}}{\sigma^2}} \| \*x - \*x^\prime \|_1,
    \end{align}
    conclude the proof.
\end{proof}

\thmMCErrorContinuous*
\begin{proof}
    From the assumption on $\cX$, there exists an $(1 / (\sqrt{M_t} L_{\alpha_t}))$-net $\cX_t \subset \cX$ that satisfies 
    $\max_{\*x \in \cX} \min_{\*x^\prime \in [\cX_t]} \| \*x - \*x^\prime \|_1 \leq 1 / (\sqrt{M_t} L_{\alpha_t})$
    and $|\cX_t| \leq \rbr{2dr \sqrt{M_t} L_{\alpha_t}}^d$.
    This is because $\cbr{ \frac{1}{d\sqrt{M_t} L_{\alpha_t}}, \frac{2}{d\sqrt{M_t} L_{\alpha_t}}, \dots, \frac{\lfloor dr\sqrt{M_t} L_{\alpha_t} \rfloor}{d\sqrt{M_t} L_{\alpha_t}} }^d$ is $(1 / (\sqrt{M_t} L_{\alpha_t}))$-net of $[0, r]^d$ with respect to L1 norm and Lemma~\ref{lem:CN_monotonicity} (Excercise 4.2.10 in \citep{Vershynin2018-high}).

    Thus, by the same proof as in Theorem~\ref{thm:MCError_discrete}, we can obtain
    \begin{align}
        \forall M_t \in \NN, 
        \Pr\rbr{\max_{\*x \in \cX_t} |\alpha_t(\*x) - \hat{\alpha}_t(\*x)| \leq \sqrt{\frac{\log(|\cX_t| / \delta)}{2M}} \biggm| \cD_{t-1} } 
        \geq 1 - \delta.
    \end{align}
    Furthermore, from Lemma~\ref{lemLipschitzContinuityAF} and the construction of $\cX_t$, we have
    \begin{align}
        \max_{\*x \in \cX} \min_{\*x^\prime \in [\cX_t]}
        \rbr{E_t(\*x) - E_t(\*x^\prime)}
        &\leq \max_{\*x \in \cX} \min_{\*x^\prime \in [\cX_t]} L_{\alpha_t} \| \*x - \*x^\prime \|_1 \\
        &\leq \frac{1}{\sqrt{M_t}},
    \end{align}
    where $E_t(\*x) = \alpha_t(\*x) - \hat{\alpha}_t(\*x)$.
    Hence, for all $M_t \in \NN$, we have,
    \begin{align}
        \Pr\rbr{\max_{\*x \in \cX} |\alpha_t(\*x) - \hat{\alpha}_t(\*x)| \leq \sqrt{\frac{d \log(2dr \sqrt{M_t} L_{\alpha_t}) - \log \delta}{2M}} + \frac{1}{\sqrt{M_t}} \biggm| \cD_{t-1} } 
        \geq 1 - \delta,
    \end{align}
    which is the desired result for the probability bound.

    Next, we show the upper bound with respect to the expectation.
    From the probability lower bound above, for any $\delta \in (0, 1)$, $\cD_{t-1}$, $c_t$, and $M_t$, we obtain
    \begin{align}
        F_t\rbr{ \sqrt{\frac{d \log(2dr \sqrt{M_t} L_{\alpha_t}) - \log \delta}{2M}} + \frac{1}{\sqrt{M_t}} }
        \geq 1 - \delta,
    \end{align}
    where $F_t$ is the cumulative distribution of $\max_{\*x \in \cX} |\alpha_t(\*x) - \hat{\alpha}_t(\*x)| \bigm| \cD_{t-1}$.
    Thus, by substituting $U \sim {\rm Uni}((0, 1))$ and taking the expectation with respect to $U$, we have
    \begin{align}
        &F_t\rbr{ \sqrt{\frac{d \log(2dr \sqrt{M_t} L_{\alpha_t}) - \log U}{2M}} + \frac{1}{\sqrt{M_t}} }
        \geq 1 - U \\
        &\Leftrightarrow
        \sqrt{\frac{d \log(2dr \sqrt{M_t} L_{\alpha_t}) - \log U}{2M}} + \frac{1}{\sqrt{M_t}} 
        \geq F_t^{-1}(1 - U) \\
        &\Leftrightarrow
        \EE_U \sbr{\sqrt{\frac{d \log(2dr \sqrt{M_t} L_{\alpha_t}) - \log U}{2M}} + \frac{1}{\sqrt{M_t}}}
        \geq \EE_U \sbr{ F_t^{-1}(1 - U)} = \EE_U \sbr{ F_t^{-1}(U)},
    \end{align}
    where $F_t^{-1}$ is a generalized inverse function of $F_t$.
    Note that $1 - U \sim {\rm Uni}((0, 1))$.
    Since $F_t^{-1}(U) \sim p\rbr{\max_{\*x \in \cX} |\alpha_t(\*x) - \hat{\alpha}_t(\*x)| \bigm| \cD_{t-1}}$, we see that 
    \begin{align}
        \EE_U \sbr{ F_t^{-1}(U)}
        = \EE\sbr{\max_{\*x \in \cX} |\alpha_t(\*x) - \hat{\alpha}_t(\*x)| \bigm| \cD_{t-1}}.
    \end{align}
    Consequently, we have
    \begin{align}
        \EE\sbr{\max_{\*x \in \cX} |\alpha_t(\*x) - \hat{\alpha}_t(\*x)| \bigm| \cD_{t-1}}
        &\leq 
        \EE_U \sbr{\sqrt{\frac{d \log(2dr \sqrt{M_t} L_{\alpha_t}) - \log U}{2M}} + \frac{1}{\sqrt{M_t}}} \\
        &\leq 
        \sqrt{ \frac{d \log(2dr \sqrt{M_t} L_{\alpha_t}) - \EE_U\sbr{\log U}}{2M}} + \frac{1}{\sqrt{M_t}} \\
        &\leq 
        \sqrt{ \frac{d \log(2dr \sqrt{M_t} L_{\alpha_t}) + 1}{2M}} + \frac{1}{\sqrt{M_t}},
    \end{align}
    where the second and third inequalities follow from Jensen's inequality and $\log U \sim {\rm Exp}(1)$, respectively.
\end{proof}

\section{Proof for Section~\ref{sec:analysis_regret}}
\label{sec:proof_regret}

\thmBSRBound*
\begin{proof}
    First, we show the proof for the case of finite domains.
    We can see that
    \begin{align}
        {\rm BSR}_T
        &= \EE \sbr{f(\*x^*) - f(\hat{\*x}_T)} \\
        &= \EE \sbr{f(\*x^*) - \mu_{T}(\*x^*) + \mu_{T}(\*x^*) - \mu_{T}(\hat{\*x}_T)} \\
        &\leq \EE \sbr{f(\*x^*) - \mu_{T}(\*x^*)},
    \end{align}
    where we use the facts that 
    $\EE[f(\hat{\*x}_T)] = \EE[ \EE[f(\hat{\*x}_T) \mid \cD_{T}]] = \EE[\mu_{T}(\hat{\*x}_T)]$
    and 
    $\hat{\*x}_T = \argmax_{\*x \in \cX} \mu_{T}(\*x)$.
    Then, we decompose the regret as in the prior works \citep{Russo2014-learning,Takeno2023-randomized,Kandasamy2018-Parallelised,paria2020-flexible}:
    \begin{align}
        {\rm BSR}_T
        &\leq \EE \sbr{f(\*x^*) - U_T(\*x^*)} + \EE \sbr{U_T(\*x^*) - \mu_{T}(\*x^*)}, \\
        &= \underbrace{\EE \sbr{f(\*x^*) - U_T(\*x^*)}}_{R_1} + \beta_T^{1/2} \underbrace{\EE \sbr{\sigma_{T} (\*x^*) }}_{R_2},
    \end{align}
    where $U_T (\*x) = \mu_T(\*x) + \beta_T^{1/2} \sigma_T(\*x)$.

    For $R_1$, we apply the same proof as in \citep{Russo2014-learning,Takeno2023-randomized,Kandasamy2018-Parallelised,paria2020-flexible}.
    We use the following fact for a Gaussian random variable $Z \sim \cN(m, s^2)$ with $m \leq 0$:
    \begin{align}
        \EE[Z_+] \leq \frac{s}{\sqrt{2 \pi}} \exp \left\{ - \frac{m^2}{2 s^2} \right\}, \label{eq:Gauss_plus_expectation_1}
    \end{align}
    where $Z_+ \coloneqq \max \{0, Z\}$.
    Then, we can arrange $R_1$ as follows:
    \begin{align}
        R_1 
        &= \EE_{\cD_{T}} [ \EE[ f(\*x^*) - U_T(\*x^*) \mid \cD_{T}]] \\
        &\leq \EE_{\cD_{T}} [ \EE[ \bigl( f(\*x^*) - U_T(\*x^*) \bigr)_+  \mid \cD_{T}]] \\
        &\leq \sum_{\*x \in \cX} \EE_{\cD_{T}} \left[ \EE \left[ \bigl(f(\*x) - U_T(\*x)\bigr)_+ \bigm| \cD_{T} \right] \right], 
    \end{align}
    where the last inequality follows from $\bigl( f(\*x^*) - U_T(\*x^*) \bigr)_+ \leq \sum_{\*x \in \cX} \bigl( f(\*x) - U_T(\*x) \bigr)_+$.
    Then, since $f(\*x) - U_T(\*x) \mid \cD_{T} \sim \cN( -\beta_T^{1/2}(\*x)\sigma_{T}(\*x), \sigma^2_{T}(\*x))$, we can apply Eq.~\eqref{eq:Gauss_plus_expectation_1}:
    \begin{align}
        R_1
        &\leq \sum_{\*x \in \cX} \EE_{\cD_{T}} \left[ \frac{\sigma_{T}(\*x)}{\sqrt{2 \pi}} \exp \left\{ - \frac{\beta_t}{2} \right\} \right] \\
        &\leq |\cX| \frac{1}{\sqrt{2 \pi}} \exp \left\{ - \frac{\beta_t}{2} \right\} \\
        &= \frac{1}{T},
    \end{align}
    where the second inequality follows from $\sigma_T(\*x) \leq 1$ due to $k(\*x, \*x) \leq 1$ for all $\*x \in \cX$.

    For $R_2$, from the definition of AF, we see that, for any $t \in \NN$, $\cD_{t-1}$, and $c_t > 0$,
    \begin{align*}
        \frac{1}{M} \sum_{m=1}^M \sigma_{t}(\*x^*_{t, m} \mid \*x_t) - c_t \sigma_{t-1}(\*x_t)
        &\leq 
        \frac{1}{M} \sum_{m=1}^M \sigma_{t}(\*x^*_{t, m} \mid \*x^*) - c_t \sigma_{t-1}(\*x^*) \\
        &\leq 
        \frac{1}{M} \sum_{m=1}^M \sigma_{t-1}(\*x^*_{t, m}) - c_t \sigma_{t-1}(\*x^*),
    \end{align*}
    where we use the fact that $\sigma_{t-1}(\*x) \geq \sigma_t(\*x)$ for all $\*x \in \cX$.
    Furthermore, from Theorems~\ref{thm:MCError_discrete} and \ref{thm:MCError_continuous}, we obtain
    \begin{align*}
        \EE \sbr{ \sigma_{t}(\*x^* \mid \*x_t) \mid \cD_{t-1}} - c_t \sigma_{t-1}(\*x_t)
        &\leq 
        \EE \sbr{ \sigma_{t-1}(\*x^*) \mid \cD_{t-1}} - c_t \sigma_{t-1}(\*x^*) + O\rbr{\frac{\log M_t}{\sqrt{M_t}}},
    \end{align*}
    Thus, we can see that
    \begin{align}
        \sigma_{t-1}(\*x^*)
        &\leq \sigma_{t-1}(\*x_t) 
        + \frac{1}{c_t}\rbr{\EE \sbr{ \sigma_{t-1}(\*x^*) \mid \cD_{t-1}} - \EE \sbr{ \sigma_{t}(\*x^*) \mid \cD_{t-1}}}
        + O\rbr{\frac{\log M_t}{\sqrt{M_t} c_t}},
    \end{align}
    where we use the equality $\EE \sbr{ \sigma_{t}(\*x^*) \mid \cD_{t-1}} = \EE \sbr{ \sigma_{t}(\*x^* \mid \*x_t) \mid \cD_{t-1}}$.
    By taking the expectation for both sides, we have
    \begin{align}
        \EE\sbr{\sigma_{t-1}(\*x^*)}
        &\leq 
        \EE\sbr{\sigma_{t-1}(\*x_t) }
        + \frac{1}{c_t}\rbr{\EE \sbr{ \sigma_{t-1}(\*x^*)} - \EE \sbr{ \sigma_{t}(\*x^*)}}
        + O\rbr{\frac{\log M_t}{\sqrt{M_t} c_t}}.
    \end{align}
    Moreover, from the monotone decreasing property of $\sigma_t(\cdot)$ with respect to $t$, we see that
    \begin{align*}
        \EE [\sigma_1(\*x^*)] 
        &\geq \dots
        \geq
        \EE [\sigma_t(\*x^*)]
        \dots
        \geq
        \EE [\sigma_T(\*x^*)].
    \end{align*}
    Hence, we can obtain the upper bound of $R_2$ as follows:
    \begin{align}
        R_2 
        &\leq \frac{1}{T} \sum_{t=1}^T \EE [\sigma_t(\*x^*)] \\
        &\leq 
        \frac{1}{T} \sum_{t=1}^T \cbr{ \EE [\sigma_{t-1}(\*x_t)] 
        + \frac{1}{c_t}\rbr{\EE \sbr{ \sigma_{t-1}(\*x^*)} - \EE \sbr{ \sigma_{t}(\*x^*)}}
        + O\rbr{\frac{\log M_t}{\sqrt{M_t} c_t}} } \\
        &= 
        \frac{1}{T} \EE \sbr{\sum_{t=1}^T \sigma_{t-1}(\*x_t)} 
        + \frac{1}{T} \sum_{t=1}^T \frac{1}{c_t}\rbr{\EE \sbr{ \sigma_{t-1}(\*x^*)} - \EE \sbr{ \sigma_{t}(\*x^*)}}
        + O\rbr{\frac{1}{T} \sum_{t=1}^T \frac{\log M_t}{\sqrt{M_t} c_t}}.
    \end{align}

    Then, the first term can be bounded from above as follows:
    \begin{align}
        \frac{1}{T} \EE \sbr{\sum_{t=1}^T \sigma_{t-1}(\*x_t)} 
        &\leq \frac{1}{T} \EE \sbr{\sqrt{T \sum_{t=1}^T \sigma_{t-1}^2(\*x_t)}} \\
        &\leq  \sqrt{ \frac{C_1 \gamma_T}{T}},
    \end{align}
    where the inequalities follows from Cauchy--Schwartz inequality and Lemma~5.4 in \citep{Srinivas2010-Gaussian}.
    For the second term, we can see that
    \begin{align}
        \sum_{t=1}^T \frac{1}{c_t}\rbr{\EE \sbr{ \sigma_{t-1}(\*x^*)} - \EE \sbr{ \sigma_{t}(\*x^*)}}
        &=
        \frac{\EE[\sigma_{0}(\*x^*)]}{c_1}
        + \sum_{t=2}^T \rbr{\frac{1}{c_t} - \frac{1}{c_{t-1}}} \EE[\sigma_{t-1}(\*x^*)]
        - \frac{\EE[\sigma_{T}(\*x^*)]}{c_T}
        \\
        &\leq
        \frac{\EE[\sigma_{0}(\*x^*)]}{c_1}
        + \EE[\sigma_{0}(\*x^*)] \sum_{t=2}^T \rbr{\frac{1}{c_t} - \frac{1}{c_{t-1}}} 
        - \frac{\EE[\sigma_{T}(\*x^*)]}{c_T}
        \\
        &=
        \frac{\EE[\sigma_{0}(\*x^*)]}{c_1}
        + \EE[\sigma_{0}(\*x^*)] \rbr{\frac{1}{c_T} - \frac{1}{c_1}}
        - \frac{\EE[\sigma_{T}(\*x^*)]}{c_T}
        \\
        &= \frac{\EE[\sigma_{0}(\*x^*)] - \EE[\sigma_{T}(\*x^*)]}{c_T} \\
        &\leq \frac{1}{c_T},
    \end{align}
    where the inequality follows from $\EE [\sigma_1(\*x^*)]  \geq \dots \geq \EE [\sigma_t(\*x^*)] \dots \geq \EE [\sigma_T(\*x^*)]$.
    Therefore, we obtained the desired result for discrete input domains.


    Second, we show the proof for the case where Assumption~\ref{assump:Bayesian_continuous} holds.
    From Lemma~\ref{lem:L_f_bound}, we obtain
    \begin{align*}
        \EE \sbr{\max_{j \in [d]} \sup_{\*x \in \cX} \left| \frac{\partial f (\*u)}{\partial \*u} \biggm|_{\*u = \*x} \right|}
        \leq b \bigl( \sqrt{\log (ad)} + \sqrt{\pi} / 2 \bigr)
        \eqqcolon L_f.
    \end{align*}
    Then, let $L = \max \cbr{L_f, L_{\sigma}}$.
    Furthermore, let $\cX_t \subset \cX$ be some finite subset and $[\*x]_t = \argmin_{\*x^\prime \in \cX_t} \|\*x - \*x^\prime \|_1$.
    From the assumption on $\cX$, we can choose $\cX_t$ as a $(1 / (T L))$-net in $\cX$ that satisfies 
    $\max_{\*x \in \cX} \| \*x - [\*x]_T \|_1 \leq 1 / (T L)$
    and $|\cX_t| \leq \rbr{2dr T L }^d$.
    This is because $\cbr{ \frac{1}{d\sqrt{M} L}, \frac{2}{d\sqrt{M} L}, \dots, \frac{\lfloor dr\sqrt{M} L \rfloor}{d\sqrt{M} L} }^d$ is $(1 / (\sqrt{M} L)$-net of $[0, r]^d$ with respect to L1 norm and Lemma~\ref{lem:CN_monotonicity} (Excercise 4.2.10 in \citep{Vershynin2018-high}).

    Then, as with the case of finite domains, we can derive
    \begin{align}
        {\rm BSR}_T
        &= \EE \sbr{f(\*x^*) - f(\hat{\*x}_T)} \\
        &\leq \EE \sbr{f(\*x^*) - \mu_{T}([\*x^*]_T)}.
    \end{align}
    Moreover, as with the case of finite domains and the definition of $\beta_t = 2\log\rbr{|\cX_t|T / \sqrt{2\pi}}$, we obtain
    \begin{align}
        {\rm BSR}_T
        &\leq 
        \EE \sbr{f(\*x^*) - f([\*x^*]_T)} 
        + \EE \sbr{f([\*x^*]_T) - U_T\rbr{[\*x^*]_T}}
        + \EE \sbr{U_T\rbr{[\*x^*]_T} - \mu_{T}([\*x^*]_T)} \\
        &\leq 
        \underbrace{\EE \sbr{f(\*x^*) - f([\*x^*]_T)}}_{R_3} 
        + \frac{1}{T}
        + \beta_T^{1/2} \underbrace{\EE \sbr{\sigma_{T}([\*x^*]_T)}}_{R_4},
    \end{align}
    where $U_T(\*x) = \mu_T(\*x) + \beta_T^{1/2}\sigma_T(\*x)$.
    Thus, we derive upper bounds of $R_3$ and $R_4$.

    For $R_3$, as in the most prior works \citep{Kandasamy2018-Parallelised,takeno2024-posterior}, we can obtain
    \begin{align}
        R_3 
        &\leq \EE\sbr{ \max_{j \in [d]} \sup_{\*x \in \cX} \left| \frac{\partial f (\*u)}{\partial \*u} \biggm|_{\*u = \*x} \right| } \| \*x - [\*x]_T\|_1 \\
        &\leq L_f \frac{1}{LT} \\
        &\leq \frac{1}{T},
    \end{align}
    from the definition of $L, L_f$ and $\cX_T$.

    For $R_4$, from Lemma~\ref{lem:Lipschitz_posterior_std}, we can apply a similar derivation as follows:
    \begin{align}
        R_4 
        &\leq \EE\sbr{ |\sigma_{T}([\*x^*]_T) - \sigma_{T}(\*x^*)| + \sigma_{T}(\*x^*) } \\
        &\leq L_{\sigma} \frac{1}{LT} + \EE\sbr{ \sigma_{T}(\*x^*) } \\
        &\leq \frac{1}{T} + \EE\sbr{ \sigma_{T}(\*x^*) }.
    \end{align}

    Combining the upper bounds of $R_3$ and $R_4$, we can see that
    \begin{align}
        {\rm BSR}_T
        &\leq \beta_T^{1/2} \EE\sbr{ \sigma_{T}(\*x^*) } + \frac{2 + \beta_T^{1/2} }{T}.
    \end{align}
    Then, the remaining proof is completed as in the case of finite domains.
\end{proof}

\section{Additional experiments}
\label{sec:additional_experiments}

Figure~\ref{fig:exp_synthetic_sensitivity} shows the synthetic function experiment results, where the experimental settings are the same as those of Figure~\ref{fig:exp_synthetic} except for the number of MC samples.
We observed that, regardless of the number of MC samples, MES often gets stuck in a local optimum.
We conjecture that this phenomenon occurs because we do not employ heuristic modification of generated maximum values $\tilde{f}_t^* = \max_{\*x \in \cX} \tilde{f}_t(\*x)$, where $\tilde{f}_t \sim p(f \mid \cD_{t})$.
That is, the original implementation \citep{Wang2017-Max} uses $\max\{ \tilde{f}_t^*, y_1 + c, \dots, y_t + c \}$ with some jitter $c > 0$ instead of $\tilde{f}_t^*$ (Line 75 in \url{https://github.com/zi-w/Max-value-Entropy-Search/blob/master/utils/sampleMaximumValues.m}).
This modification often alleviates over-exploitation, since a smaller $\tilde{f}_t^*$ facilitates exploitation.
Furthermore, we do not employ such a heuristic modification for JES either.
As a result, JES often degraded as the number of MC samples increased.
When the number of MC samples increases, the probability that a smaller $\tilde{f}_t^*$ exists is large.
Thus, we conjecture that JES gets stuck in local optima since such effects from a smaller $\tilde{f}_t^*$ increase.
On the other hand, our OVR and ROVR show robust performance with respect to the number of MC samples, with performance becoming more stable as the number of MC samples increases.

\begin{figure}[t]
    \centering
    \includegraphics[width=0.6\linewidth]{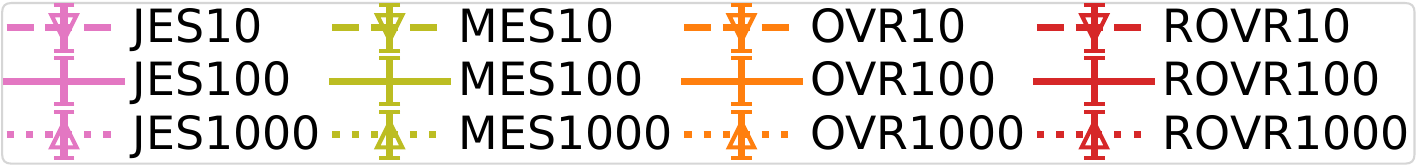}
    
    \includegraphics[width=0.021\linewidth]{fig/vertical_axes_label.pdf}
    \includegraphics[width=0.237\linewidth]{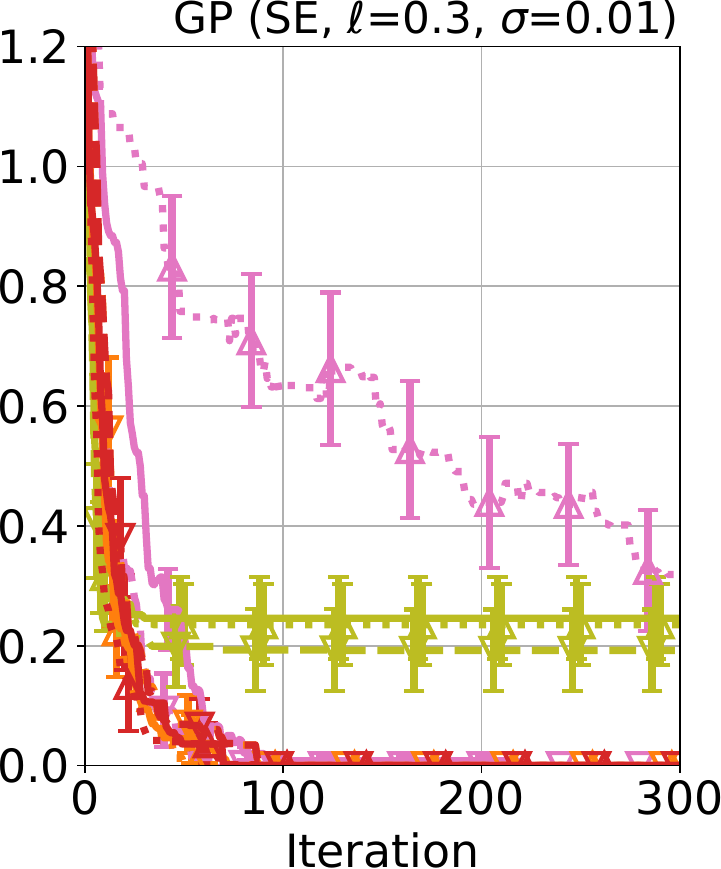}
    \includegraphics[width=0.237\linewidth]{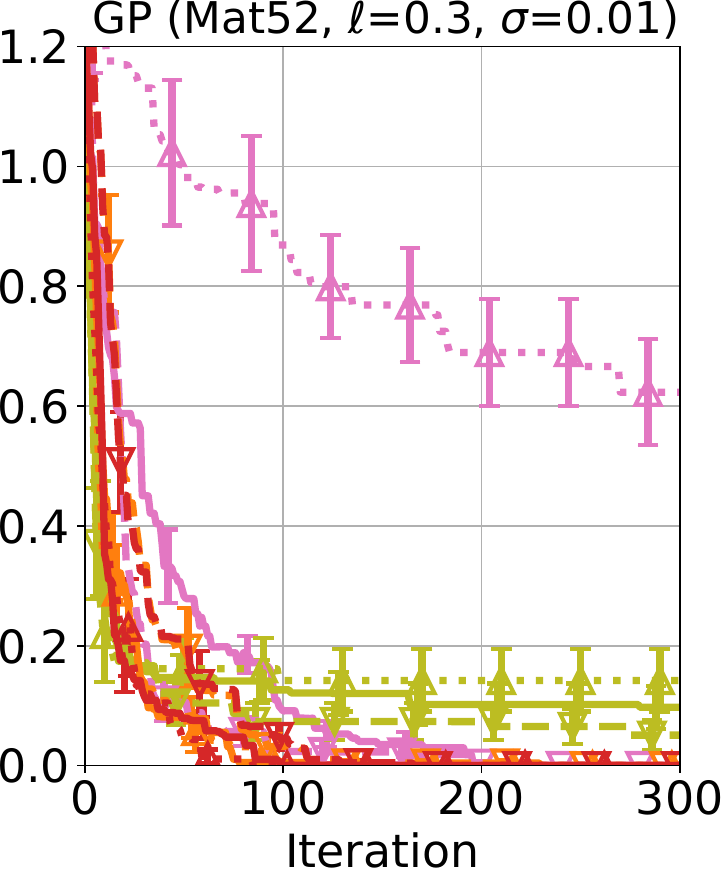}
    \includegraphics[width=0.237\linewidth]{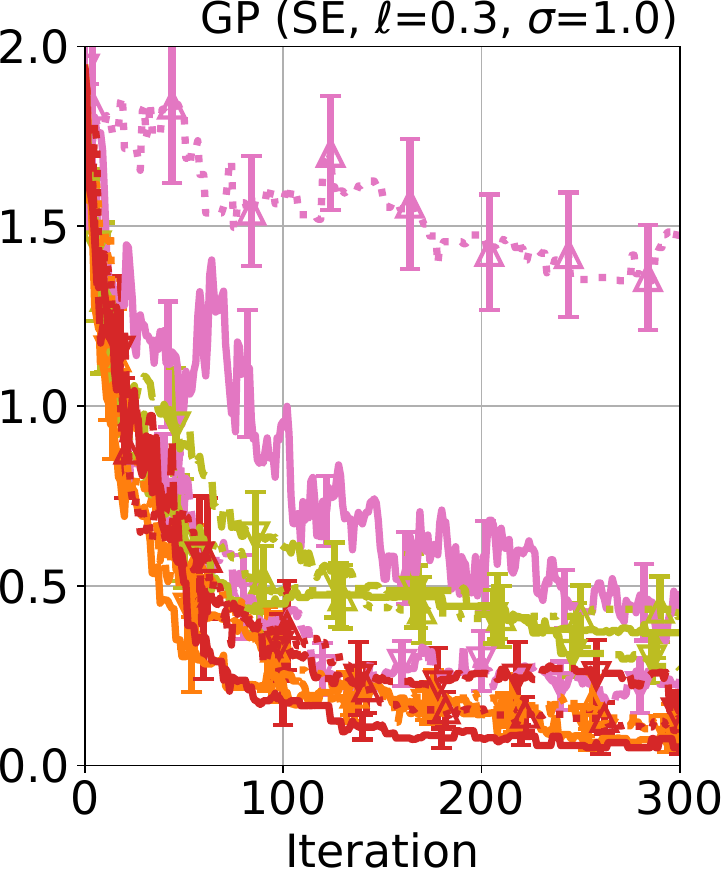}
    \includegraphics[width=0.237\linewidth]{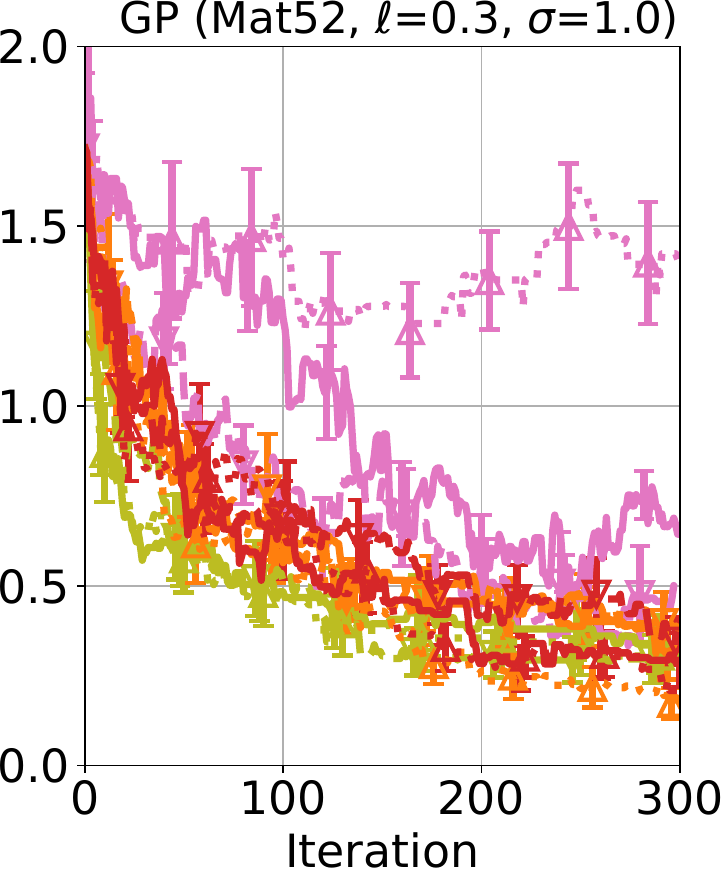}
    
    \caption{
    The average and standard error of simple regret on the synthetic function experiments.
    The suffix number in the legend implies the number of MC samples.
    The left two plots and the right two plots show the results for $\sigma^2=10^{-4}$ and $\sigma^2=1$, respectively.
    }
    \label{fig:exp_synthetic_sensitivity}
\end{figure}

\end{document}